\newcommand{\myparagraph}[1]{\vspace{0.1em}\noindent\textbf{#1}}
\newcommand{\ie}{\textit{i}.\textit{e}.}
\newcommand{\eg}{\textit{e}.\textit{g}.}
\newtheorem{theorem}{Theorem}[section]
\newtheorem{lemma}[theorem]{Lemma}
\begin{document}
\title{Generalized Task-Driven Medical Image Quality Enhancement with Gradient Promotion}
\author{Dong~Zhang,~\IEEEmembership{Member,~IEEE},
        Kwang-Ting~Cheng,~\IEEEmembership{Fellow,~IEEE}
\IEEEcompsocitemizethanks{
\IEEEcompsocthanksitem D. Zhang and K-T. Cheng are with the Department of Electronic and Computer Engineering, The Hong Kong University of Science and Technology, Hong Kong, China. E-mail:~\{dongz, timcheng\}@ust.hk.
}
}
\markboth{}%
{Shell \MakeLowercase{\textit{et al.}}: Bare Demo of IEEEtran.cls for Computer Society Journals}
\IEEEtitleabstractindextext{
\begin{abstract}
\justifying
Thanks to the recent achievements in task-driven image quality enhancement (IQE) models like ESTR~\cite{liu2022exploring}, the image enhancement model and the visual recognition model can mutually enhance each other's quantitation while producing high-quality processed images that are perceivable by our human vision systems. However, existing task-driven IQE models tend to overlook an underlying fact -- different levels of vision tasks have varying and sometimes conflicting requirements of image features. To address this problem, this paper proposes a generalized gradient promotion (\emph{GradProm}) training strategy for task-driven IQE of medical images. Specifically, we partition a task-driven IQE system into two sub-models, \ie, a mainstream model for image enhancement and an auxiliary model for visual recognition. During training, \emph{GradProm} updates only parameters of the image enhancement model using gradients of the visual recognition model and the image enhancement model, but only when gradients of these two sub-models are aligned in the same direction, which is measured by their cosine similarity. In case gradients of these two sub-models are not in the same direction, \emph{GradProm} only uses the gradient of the image enhancement model to update its parameters. Theoretically, we have proved that the optimization direction of the image enhancement model will not be biased by the auxiliary visual recognition model under the implementation of \emph{GradProm}. Empirically, extensive experimental results on four public yet challenging medical image datasets demonstrated the superior performance of \emph{GradProm} over existing state-of-the-art methods.
\end{abstract}
\begin{IEEEkeywords}
Image quality enhancement, Medical image processing, Task-auxiliary learning, Multi-task learning
\end{IEEEkeywords}}

\maketitle
\IEEEdisplaynontitleabstractindextext
\IEEEpeerreviewmaketitle
\section{Introduction}
\IEEEPARstart{M}{edical} imaging performs an increasingly important role in modern medicine systems, which enables physicians to visualize internal anatomical structures and evaluate disease progression~\cite{varoquaux2022machine,zhang2022deep}. Medical image analysis (MIA) -- an essential component of medical imaging -- aims at providing an underlying pattern to analyze and interpret medical images for clinical decision-making~\cite{shen2017deep,anwar2018medical,zhang2022deep}. During past years, thanks to the tremendous progress of deep learning technology in computer vision tasks~\cite{he2016deep,han2022survey,long2015fully}, the community of MIA has also proliferated that enables clinicians to extract a wealth of semantic information from the given medical images and apply it to a wide range of practical applications, \eg, cancer diagnosis~~\cite{lin2019automated}, multi-organ segmentation~\cite{ma2022fast}, and robotic surgery~\cite{gao2021future}. 
\begin{figure*}[t]
\centering
\includegraphics[width=.99\textwidth]{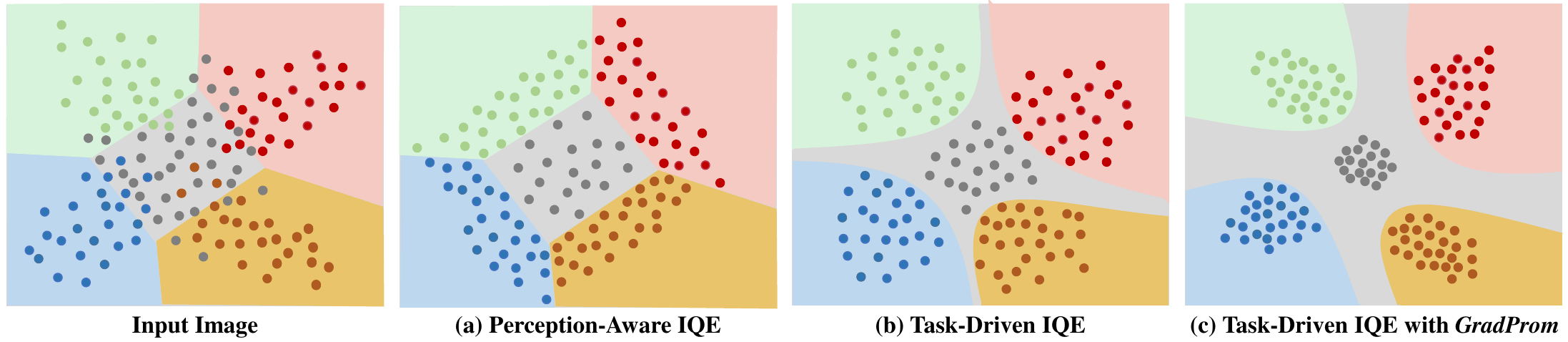}
\vspace{-4mm}
\caption{The evolution of medical image quality enhancement (IQE) in visual recognition. The IQE model is represented by dots of different colors in the foreground, indicating different categories, while the colorful block in the background represents the decision boundary in the recognition model. In perception-aware IQE (a), samples have a closer representation space, leading to no direct benefit for the decision boundary in the downstream IQE model. Task-driven IQE (b) employs data transfer or gradient transfer to enhance both the upstream IQE model and the downstream recognition model, resulting in a more compact representation space and a clear decision boundary. However, the feature requirements of these two models may be inconsistent (as shown in Figure~\ref{figr1} below), leading to sub-optimization. Our \emph{GradProm} in (c) overcomes this limitation, resulting in a more optimal outcome where the representation space is more compact and the decision boundary is clearer.}
\vspace{-3mm}
\label{fig1}
\end{figure*} 

Image quality matters for MIA~\cite{jawdekar2021review,jose2021image}, \ie, images of a higher quality are expected to produce more accurate recognition performance~\cite{ma2013dictionary,hou2023self,li2023transforming}. As a result, various medical image quality enhancement (IQE) approaches have been developed, \eg, super-resolution~\cite{li2021review,georgescu2023multimodal}, and image denoising~\cite{patil2022medical}. The input of these approaches is a low-quality (\ie, low-resolution or/and \emph{with} noise) medical image, and the output is a rendered medical image that meets some expected high-quality (\ie, high-resolution or/and \emph{without} noise) characteristics~\cite{jawdekar2021review,han2022survey,liu2022exploring}. Medical images processed by IQE approaches are then used as the input of downstream visual recognition models, \eg, image diagnose and semantic segmentation~\cite{varoquaux2022machine,mahapatra2019image}. Generally, an implicit consensus behind these medical image processing approaches is that the perception obtained by our visual systems, and evaluation metrics (\eg, PSNR and SSIM) of such a perception are important criteria for evaluating the image quality~\cite{setiadi2021psnr}. Therefore, these approaches mainly pursue an apparent high-quality presentation close to our human visual perception. For example,  ``human evaluation'' is an important practical criteria for the medical IQE task~\cite{shamshad2022transformers}.

However, the enhanced visual perceptual image quality does not equate to more beneficial information obtained by downstream visual recognition models~\cite{xie2015task,liu2022exploring,sharma2018classification,johnson2016perceptual}. The reasons can be roughly attributed to the following two aspects.
(\uppercase\expandafter{\romannumeral1})~Intuitively, the perception-aware medical image processing model is separated from the downstream visual recognition model in the training process (\ie, an IQE system with both image processing models and visual recognition models is not trained in an end-to-end manner)~\cite{han2022survey,jawdekar2021review}, therefore, the decision boundary of the downstream visual recognition model can not be directly affected by IQE approaches~\cite{liu2022exploring,mahapatra2019image,patil2022medical}. 
(\uppercase\expandafter{\romannumeral2})~Perception-aware methods have the potential to make image representations closer in manifold space. As shown in Figure~\ref{fig1} (a), due to some object-specific feature grids are removed from input images in the perception-aware medical IQE models (\eg, image denoising), different categories of samples fed into the downstream visual recognition model are actually closer (\emph{compared to the input image representation space}) in the representation space~\cite{jose2021image,patil2022medical,ma2013dictionary}, result in a closer decision boundary, which is somewhat harmful~\cite{hou2023self,talebi2018nima}. This standpoint can also be used to explain why even though the performance of super-resolution is improved, the performance of the downstream image classification is unexpectedly decreased~\cite{johnson2016perceptual,jaffe2017super,liao2018defense}.

Based on the above analysis, the proposals of the task-driven IQE paradigm becomes a matter of course~\cite{liu2022exploring}. Existing task-driven IQE paradigm jointly trains image enhancement models and visual recognition models\cite{liu2022exploring}. The goal of this paradigm is to improve the perceived image quality by our human visual systems while also enabling mutual quantitative performance improvements between IQE models and recognition models. This advanced paradigm has been shown to enhance recognition accuracy in both upstream and downstream tasks~\cite{xie2015task,johnson2016perceptual}. For example, ESTR~\cite{liu2022exploring} concatenates an image quality enhancement model (\ie, \emph{image super-resolution, denoising, and JPEG-deblocking}) and an visual recognition model (\ie, \emph{image classification and object detection}), and fixes parameters of the recognition model during training to avoid the performance crash. Experimental results validate a strong generalization of this method. 
However, task-driven IQE approaches overlook an important fact:~\emph{different levels of computer vision tasks have varying and sometimes conflicting demands for image features}~\cite{zhang2018visual,mahapatra2022interpretability,zhang2020causal,baslamisli2018joint,zeng2019joint}. This is also our key motivation. For example, as shown in Figure~\ref{figr1}, under the same given image in (a), the denoising task in (b) focuses on all regions of the given image, where each pixel contributes to its output. In comparison, the semantic segmentation task in (c) and the diagnosis task in (d) focus on the foreground region of the object and the discriminative local region of the foreground object, respectively.
Therefore, although the advanced task-driven IQE paradigm can alleviate the intrinsic problems of broken upstream and downstream models in the perception-aware IQE paradigm, as shown in Figure~\ref{fig1} (b), there are potential inconsistencies between the upstream image enhancement model and the downstream visual recognition model~\cite{li2019gradient,wang2020makes}, resulting in declined performance~\cite{liu2019auxiliary,zhu2022prompt}.
\begin{figure}[t]
\centering
\includegraphics[width=.48\textwidth]{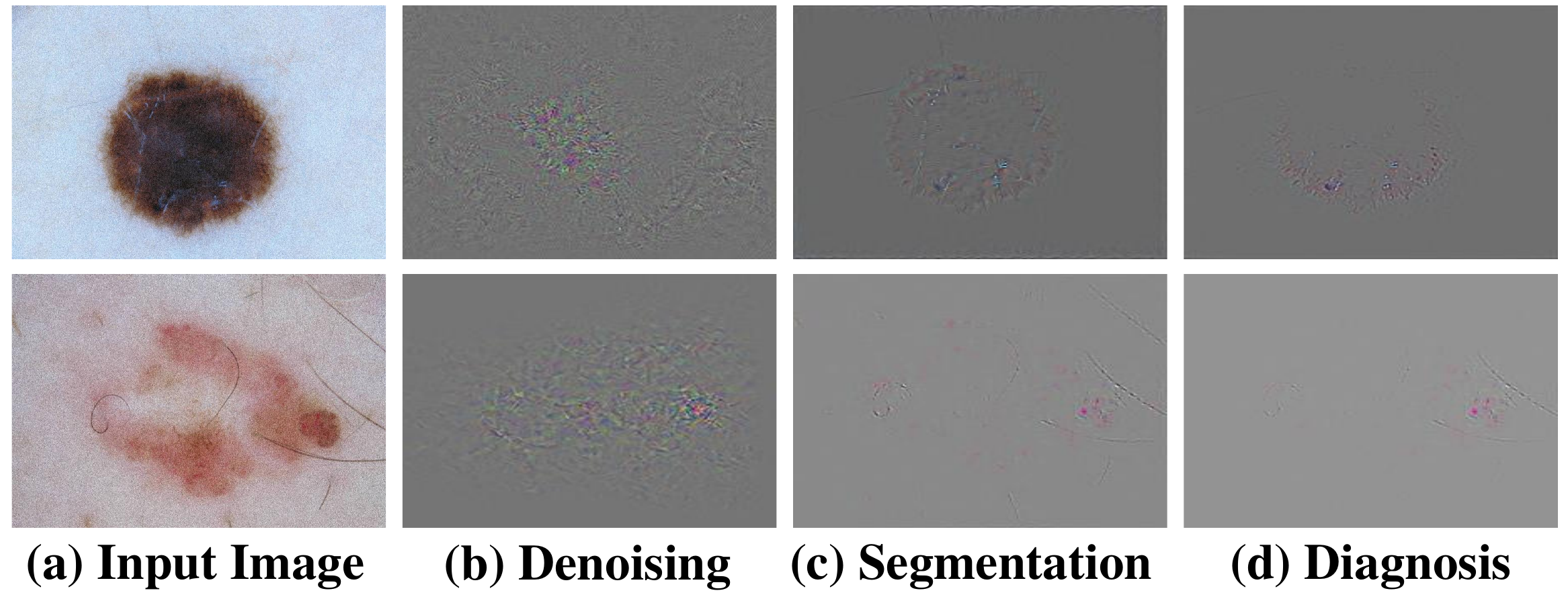}
\vspace{-2mm}
\caption{The guided backpropagation visualizations for different vision tasks. Under the same given image in (a), for denoising in (b), we use SR-ResNet~\cite{ledig2017photo}, for semantic segmentation in (c), we use UNet~\cite{ronneberger2015u}, and for diagnosis in (d), we use ResNet~\cite{he2016deep}. Sample images are from the ISIC 2018 dataset~\cite{codella2019skin}.}
\vspace{-4mm}
\label{figr1}
\end{figure} 
  
The story of task-driven IQE should be continued. To address limitations in the existing medical task-driven IQE paradigm, we propose a simple yet effective generalized gradient promotion (\emph{GradProm}) training strategy from the perspective of gradient-dominated parameter updates. 
"Generalized" refers to that the proposed \emph{GradProm} is applicable to a wide range of medical image modalities, rather than being limited to a specific image modality. 
Specifically, we first explicitly partition a task-driven IQE system into two sub-models -- \textit{a mainstream model for image enhancement} and \textit{an auxiliary model for visual recognition}. By ``promotion'', we mean that using the auxiliary recognition model to boost the optimization process of the mainstream image enhancement model. 
Because we care more about the image enhancement performance in the task-driven IQE paradigm. During the training process, \emph{GradProm} updates parameters of the mainstream image enhancement model using gradients of the visual recognition model and the image enhancement model, but only when gradients of these two sub-models are aligned in the same optimization direction ({\ie, \emph{the visual recognition model is useful}}), which is measured by their cosine similarity. In case gradients of these two sub-models are not in the same optimization direction ({\ie, \emph{the visual recognition model is not useful}}), \emph{GradProm} only uses the gradient of the image enhancement model to update its parameters. 
Theoretically, \emph{GradProm} ensures that the gradient descent direction of the image enhancement model will not be biased by the gradient descent direction of the visual recognition model, as illustrated in Figure~\ref{fig1} (c), resulting in a compact representation space (\emph{for the mainstream image enhancement model}) and an alienative decision boundary (\emph{for the auxiliary visual recognition model}). 
Our experiments involved publicly available synthetic and real medical datasets from diverse image modalities, such as ISIC 2018~\cite{codella2019skin}, COVID-CT~\cite{yang2020covid}, Lizard~\cite{graham2021lizard}, and CAMUS~\cite{leclerc2019deep}. The obtained experimental results demonstrate the superior performance of our proposed \emph{GradProm} approach over existing state-of-the-art methods for two challenging medical image enhancement tasks, namely denoising and super-resolution, as well as two fundamental medical visual recognition tasks, namely diagnosis and segmentation.

Our contributions of this work can be summarized as:
\begin{itemize}
\item We propose a task-driven paradigm for medical IQE and explicitly split the system into two sub-models, which is a novel mechanism to the problem.
\item We introduce \emph{GradProm}, a simple yet effective generalized training strategy that can dynamically train the two sub-models and achieve continuous performance improvements without requiring extra data or changes to network architecture.
\item We provide theoretical proof that \emph{GradProm} can converge to a local optimum without being biased towards the auxiliary visual recognition model.
\item We conduct experiments on four publicly available medical image datasets from diverse modalities, demonstrating that our proposed \emph{GradProm} achieves state-of-the-art performance on IQE.
\end{itemize}
\section{Related Work}
\subsection{Medical Image Quality Enhancement (IQE)}
Medical IQE is a critical area of image processing that aims to improve the visual appearance and diagnostic accuracy of medical images, such as X-rays, computerized tomography scans, magnetic resonance imaging scans, and 2D ultrasound images~\cite{shen2017deep,zhang2022deep}. In recent years, with the advent of deep learning technology, medical IQE has received significant attention and has been applied to various downstream applications, such as medical image diagnosis and multi-organ segmentation~\cite{mahapatra2019image,yan2020higcin,jose2021image,georgescu2023multimodal,varoquaux2022machine}. Existing medical IQE tasks can be broadly classified into two categories: image restoration and image enhancement. \emph{The first category}, image restoration, aims to improve the quality of degraded or noisy medical images~\cite{yi2019generative}. However, this category poses several challenges, including the presence of noise and artifacts in medical images and the development of methods that can handle different types of degradation and imaging modalities~\cite{lin2019automated,yan2023progressive,mahapatra2019image}. \emph{The second category}, image enhancement, aims to improve image contrast and/or sharpen image details. The commonly used approaches include contrast enhancement, edge enhancement, multi-scale analysis, and color correction~\cite{zhang2022new,shamshad2022transformers,mei2022deep}. In this paper, we focus on both medical image restoration and medical image enhancement tasks. Our contribution lies in proposing a generalized gradient-based model training strategy for the task-driven medical IQE paradigm. One of our notable advantages is that we improve the model's performance without changing its architecture or data input manner. Therefore, our method will not increase the computational complexity during the inference stage.
\subsection{Multi-Task Learning (MTL) and Auxiliary Learning}
MTL is a popular approach in computer vision that aims to leverage the useful knowledge from related tasks to improve the overall performance of all involved tasks~\cite{caruana1997multitask,zhang2018overview}. In recent years, MTL has been successfully applied to various computer vision applications and has been shown to be effective in enhancing the performance of individual tasks while reducing the amount of data and computation required~\cite{sener2018multi}. MTL treats all involved tasks with equal importance in a unified framework, attempting to train a computer vision system with shared feature representations to achieve balanced optimization for multiple tasks. 
Examples of tasks that can be tackled using MTL include object detection and segmentation~\cite{li2016deepsaliency,zhang2024cae}, image classification and landmark localization~\cite{chen2021residual}, and image captioning and text-to-image generation~\cite{stefanini2022show}. In particular, when multiple tasks are not equally important, the involved tasks can be classified into the mainstream task(s) and the auxiliary task(s). This setting is also referred to as auxiliary learning~\cite{liebel2018auxiliary,yu2020gradient}. In this case, only the results of the mainstream task are of concern, and the auxiliary task is used only to assist the main tasks in achieving better performance~\cite{lyle2021effect,shi2020auxiliary}.
Auxiliary learning has been commonly used in generative adversarial networks, such as classification and jigsaw solvers~\cite{liu2019self,lin2019adaptive}. In this paper, we propose a task-driven medical image quality enhancement system as an auxiliary learning paradigm, where image processing is the mainstream task, and image recognition is the auxiliary task. Our contribution is to introduce a gradient promotion strategy that uses the auxiliary recognition model to boost the performance of the mainstream image processing model. Furthermore, our method is not limited to a specific image modality and is applicable to various medical image modalities, demonstrating strong generalization capabilities.
\section{Methodology}
\subsection{Preliminaries}
\label{sec3.1}
Following~\cite{li2021review,georgescu2023multimodal,liu2022exploring,patil2022medical,zhang2022new}, the task-driven medical IQE paradigm is intrinsically an image enhancement task, where the input is a low-quality image $X$ and the corresponding high-quality image $Y$ as its label. The training process is to make $X$ as close as possible to $Y$ after being encoded by the the image enhancement model $IP$ and the visual recognition model $VR$, where the input of $VR$ is the output of $IP$~\cite{liu2022exploring}. The whole system is jointly trained in an end-to-end fashion with pixel-wise loss functions of $IP$ and $VR$, respectively. Therefore, the total loss $L_{total}$ for a task-driven medical IQE system can be expressed as:
\begin{equation}
L_{total} = L_{IP} + \lambda~L_{VR}, 
\label{eq1} 
\end{equation}
where $L_{IP}$ denotes the image enhancement loss, and $L_{VR}$ denotes the visual recognition loss. 
$\lambda$ is a hyper-parameter that is used to control the loss balance between $L_{IP}$ and $L_{VR}$. Specifically, $L_{IP}$ can be formulated as:
\begin{equation}
L_{IP} = F_{loss} (IP(X), Y), 
\label{eq2} 
\end{equation}
where $F_{loss}(\cdot)$ denotes a pixel-wise loss function (\eg, mean absolute error loss, mean squared error loss, and L1/L2 loss~\cite{patil2022medical}). Besides, an intuitive $L_{VR}$ can be formulated as:
\begin{equation}
L_{VR} = F_{loss} (VR(IP(X)), VR(Y)).
\label{eq3} 
\end{equation}
The assumption behind Eq.~\ref{eq3} is that we do not have semantic labels (\ie, image-level class labels for image diagnose or the pixel-level masks for semantic segmentation) for $VR$, \ie, it is based on the unsupervised setting. At the same time, a task-driven medical IQE system can be formulated as follows if we have the semantic label $S$ of $VR$:
\begin{equation}
L_{VR} = F_{loss} (VR(IP(X)), S),
\label{eq4} 
\end{equation}
\ie, this is a supervised formulation of $VR$. In what follows, to distinguish between the unsupervised and the supervised $L_{VR}$, we formulate the unsupervised loss function and the supervised loss function as $L^u_{VR}$ and $L^s_{VR}$, respectively.
Although the task-driven medical IQE paradigm can associate the mainstream and the auxiliary models in an uniform system, it overlooked an important fact: different levels of tasks have varying and sometimes conflicting demands for image features~\cite{zhang2018visual,mahapatra2022interpretability,zhang2020causal}, \ie, there are potential inconsistencies between $IP$ and $VR$~\cite{li2019gradient,wang2020makes}. Therefore, the joint training strategy that ignores the differences in requirements of different computer vision models for features will bring the declined performance~\cite{liu2019auxiliary,zhu2022prompt}. In the following, we will propose a simple yet generalized gradient-based training strategy to solve this problem.
\begin{figure}[t]
\centering
\includegraphics[width=.48\textwidth]{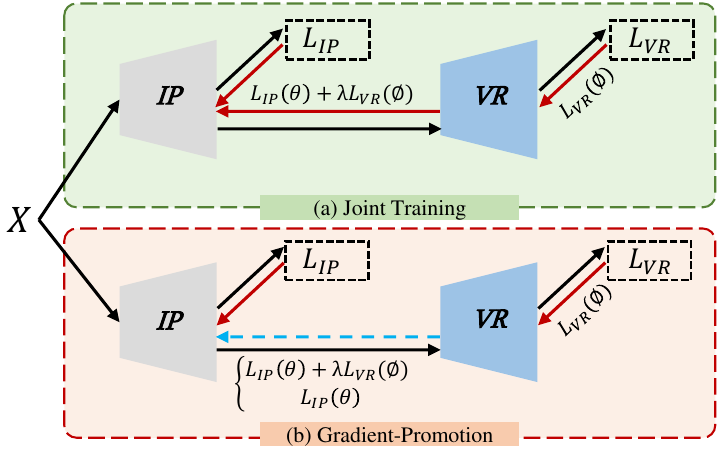}
\caption{(a) Joint training puts the upstream and downstream models together, but ignores that different models have inconsistent feature requirements. (b) Our \emph{GradProm} dynamically selects parameters that can be updated according to the similarity of different model gradients.}
\vspace{-2mm}
\label{fig2}
\end{figure} 

\subsection{Gradient-Promotion (GradProm)}
\label{sec3.2}
In a task-driven medical IQE system, we mainly focus on the performance of $IP$. The core role of $VR$ is only to assist $IP$ to be assigned more beneficial semantic information during training~\cite{liu2022exploring,xie2015task}. Based on this consensus, we explicitly divide a task-driven medical IQE system into two sub-models: the mainstream $IP$ and the auxiliary $VR$. For convenience, we formulate that parameters of $IP$ is $\theta$, and parameters of $VR$ is $\phi$. Therefore, Eq.~\ref{eq1} can be expressed as:
\begin{equation}
L_{total}(\theta_{T}) = L_{IP}(\theta) + \lambda~L_{VR}(\phi), 
\label{eq5} 
\end{equation}
where $\theta_{T}$ denotes all learnable parameters of a task-driven medical IQE system. The solo gradient vector of $IP$ and $VR$ can be expressed as $G_{IP} = \nabla_{\theta} L_{IP}(\theta)$ and $G_{VR} = \nabla_{\phi} L_{VR}(\phi)$, respectively. In joint training, as illustrated in Figure~\ref{fig2} (a), we aim at approximating the minimization of the following formulation: 
\begin{equation}
G_{T} = \left(\underbrace{\nabla_{\theta}(L_{IP}(\theta) + \lambda L_{VR}(\phi))}_{\text{for $IP$}}; \underbrace{\nabla_{\phi} L_{VR}(\phi)}_{\text{for $VR$}}\right),
\label{eq6} 
\end{equation}
where $G_{T}$ denotes the total gradient. 

In \emph{GradProm}, we hope that the task-driven medical IQE system can dynamically adjust the training objective via the model gradients. To this end, the relation between $G_{IP}$ and $G_{VR}$ can particularly be divided into the following two scenarios: \romannumeral1) $G_{IP}$ and $G_{VR}$ are aligned in the same learning direction; \romannumeral2) $G_{IP}$ and $G_{VR}$ are not aligned in the same learning direction.  
Following~\cite{zhu2022prompt,du2018adapting,yu2020gradient}, the optimization direction $s$ between $G_{IP}$ and $G_{VR}$ can be measured by their cosine similarity, \ie, $s = \textrm{cos}(G_{IP},~G_{VR})$.
For the first scenario, that means $VR$ is beneficial for the optimization of $IP$, \ie, the auxiliary visual recognition model is useful. Then we can use both $L_{IP}(\theta)$ and $L_{VR}(\phi)$ to update $G_{IP}$ of the mainstream $IP$, \ie, $G_{IP} = \nabla_{\theta}(L_{IP}(\theta) + L_{VR}(\phi))$. 
For the second scenario, that means $VR$ is not beneficial for the optimization of $IP$, \ie, the auxiliary visual recognition model is not useful. Then we just need to use $L_{IP}(\theta)$ to update $G_{IP}$ of the mainstream $IP$, \ie, $G_{IP} = \nabla_{\theta}(L_{IP}(\theta))$. 
$L_{VR}(\phi)$ of the auxiliary $VR$ is only used to updated by its own gradient $G_{VR}$, \ie, $G_{VR} = \nabla_{\phi} L_{VR}(\phi)$ in both scenarios above. Such a dynamic loss assignment mechanism can avoid optimization conflicts caused when $G_{IP}$ and $G_{VR}$ are not in the same direction. Figure~\ref{fig2} (b) illustrates this dynamic selection mechanism. 
Although there may be other more complex intermediate scenarios in practice, such as when $IP$ and $VR$ belong to the same half-space and are roughly aligned in the same direction (\eg, $s = 0.1, 0.3,$ or $0.5$ are finer-grained states), we graciously contend that delving into these specific minutiae is unwarranted. Investigating such finer-grained cases not only fails to induce a qualitative shift in model training but also augments methodological complexity, rendering the algorithm excessively redundant. Therefore, based on the aforementioned two scenarios, \emph{GradProm} can be formulated as:
\[
G_{T} = \begin{cases}
\left(\underbrace{\nabla_{\theta}(L_{IP}(\theta) + \lambda L_{VR}(\phi))}_{\text{for $IP$}}; \underbrace{\nabla_{\phi} L_{VR}(\phi)}_{\text{for $VR$}}\right), & if~s\geq 0,\\
\left(\underbrace{\nabla_{\theta}(L_{IP}(\theta))}_{\text{for $IP$}}; \underbrace{\nabla_{\phi} L_{VR}(\phi)}_{\text{for $VR$}}\right), & if~s< 0.
\end{cases} 
\]
It is worth noting that the proposed \emph{GradProm} cannot speed up the inference process or always guarantee performance improvements~\cite{sener2018multi}. Practically, it can only be used to alleviate that bad situation, \ie, when gradients of the mainstream $IP$ and the auxiliary $VR$ are inconsistent.

In our implementation, both $IP$ and $VR$ are pre-trained on the ImageNet dataset~\cite{deng2009imagenet}, which is a famous natural image dataset. Following~\cite{zhang2021self,zhang2022deep}, to avoid model collapse, we first train $IP$ on the medical image dataset for some epochs, and then cascade $VR$ for the task-driven medical IQE. Besides, since the default dataset fed into $IP$ and $VR$ are the same, there is also no domain gap problem between the two models.

\subsection{Theoretical Analysis}
\label{sec3.3}
In this section, we provide a theoretical analysis to prove that our \emph{GradProm} can converge to the local optimization.

\begin{lemma}
For the given gradient vector fields $G_{IP} = \nabla_{\theta} L_{IP} (\theta)$ and $G_{VR} = \nabla_{\phi} L_{VR}(\phi)$, GradProm can achieve the local minimum via the following update rule:
\[
\theta_{T}^{t+1}: = \theta_{T}^{t} - \alpha^t\left(G_{IP}^t + G_{VR}^t \textrm{max}(0, \textrm{cos}(G_{IP}^t, G_{VR}^t))\right),
\]
under the condition that $\alpha^t$ is as small as possible, where $t$/$t+1$ denotes the $t$-th/$(t+1)$-th training epoch for parameter update.
\end{lemma}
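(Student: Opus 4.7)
The plan is a standard descent-lemma argument applied to the effective search direction $d^t := G_{IP}^t + G_{VR}^t \max(0, \cos(G_{IP}^t, G_{VR}^t))$ used to update $\theta$. I would first show that $-d^t$ is always a descent direction for $L_{IP}$, then apply a Taylor expansion at small $\alpha^t$ to extract a monotone decrease, and finally invoke the classical bounded-below, monotone-sequence argument to obtain convergence to a local minimum.

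The first step is a case split on $s^t := \cos(G_{IP}^t, G_{VR}^t)$. If $s^t \geq 0$, then $\langle d^t, G_{IP}^t \rangle = \|G_{IP}^t\|^2 + s^t\, \|G_{IP}^t\|\,\|G_{VR}^t\| \geq \|G_{IP}^t\|^2$. If $s^t < 0$, then by construction $d^t = G_{IP}^t$, so $\langle d^t, G_{IP}^t \rangle = \|G_{IP}^t\|^2$. In either case $\langle d^t, G_{IP}^t \rangle \geq \|G_{IP}^t\|^2 \geq 0$, which is precisely the formal embodiment of the paper's intuitive claim in Section~\ref{sec3.2} that \emph{GradProm} never biases the mainstream model against its own gradient. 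The second step assumes $L_{IP}$ has an $L$-Lipschitz gradient (a mild regularity condition natural for the pixel-wise losses listed in Section~\ref{sec3.1}) and applies the descent lemma
\begin{equation*}
L_{IP}(\theta^{t+1}) \leq L_{IP}(\theta^t) - \alpha^t \langle G_{IP}^t, d^t \rangle + \frac{L}{2}(\alpha^t)^2 \|d^t\|^2.
\end{equation*}
Interpreting ``$\alpha^t$ as small as possible'' as $\alpha^t \leq \langle G_{IP}^t, d^t\rangle / (L \|d^t\|^2)$, the linear term dominates the quadratic one and a strict decrease $L_{IP}(\theta^{t+1}) < L_{IP}(\theta^t)$ follows whenever $G_{IP}^t \neq 0$. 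Since $L_{IP}$ is bounded below, the telescoping sum $\sum_t \alpha^t \|G_{IP}^t\|^2$ is finite, forcing $\|G_{IP}^t\| \to 0$, which is the asserted local-minimum statement.

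The main obstacle is not any individual estimate but the informality of the hypotheses: the lemma neither specifies smoothness of $L_{IP}$ nor pins down the step-size rule, and it also suppresses the joint update of $\phi$ via $G_{VR}^t$. I would resolve the first by explicitly declaring local Lipschitz continuity of $\nabla L_{IP}$ at the outset, the second by substituting the Lipschitz-bounded step size above for the phrase ``as small as possible'', and the third by noting that the $\phi$-update is an ordinary gradient step on $L_{VR}$ in its own coordinate block, so its analysis is standard and decoupled from the $\theta$-argument. A subtle but worth-highlighting point is that the conclusion pertains to $L_{IP}$ alone rather than $L_{total}$; this asymmetry is exactly what the paper advertises, since the $\max(0, \cdot)$ clipping ensures $VR$ only contributes to $\theta$-updates when its gradient aligns with $G_{IP}$.
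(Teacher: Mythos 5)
Your proposal is correct and its core step coincides with the paper's: both arguments hinge on showing that the update direction $d^t = G_{IP}^t + G_{VR}^t\max(0,\cos(G_{IP}^t,G_{VR}^t))$ has non-negative inner product with $\nabla L_{IP}^t$, so the auxiliary gradient can never push $\theta$ against the mainstream objective. The paper establishes this by expanding the inner product into two terms and observing each is non-negative (the second because the $\max$ clipping vanishes exactly when the cosine is negative); your case split on the sign of $s^t$ is the same fact organized slightly more cleanly. Where you genuinely go beyond the paper is the second half: the paper stops at the inner-product inequality, adds a stationarity remark (if the inner product is zero then $\nabla L_{IP}^t=0$), and then simply asserts convergence ``under a sufficiently small learning rate.'' You instead make the missing link explicit by assuming $\nabla L_{IP}$ is Lipschitz, invoking the descent lemma to get a strict decrease for $\alpha^t \le \langle G_{IP}^t,d^t\rangle/(L\|d^t\|^2)$, and telescoping against the lower bound on $L_{IP}$ to conclude $\|G_{IP}^t\|\to 0$. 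This buys a genuinely rigorous convergence statement at the cost of an extra smoothness hypothesis; it also honestly exposes that what is proved is first-order stationarity rather than a local minimum, a distinction both you and the paper should flag. Two small points: in the $s^t\ge 0$ case the cross term is $\langle s^t G_{VR}^t, G_{IP}^t\rangle = (s^t)^2\|G_{IP}^t\|\,\|G_{VR}^t\|$, not $s^t\|G_{IP}^t\|\,\|G_{VR}^t\|$ (harmless here since both are non-negative, but worth correcting); and to pass from $\sum_t \alpha^t\|G_{IP}^t\|^2<\infty$ to $\|G_{IP}^t\|\to 0$ you need the step sizes not to decay too fast (e.g.\ $\sum_t\alpha^t=\infty$) or $\|d^t\|$ bounded, which you should state alongside the Lipschitz assumption.
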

\begin{proof}
We start by considering the update rule for the $T$-th model parameter $\theta$ at $t$-th epoch, which is given by:
\[
\theta_{T}^{t+1} :=  \theta_{T}^{t} - \alpha^t \triangle \theta_{T}^t,
\]
where $\triangle \theta_{T}^t = G_{IP}^t + G_{VR}^t \textrm{max}\left(0, \textrm{cos}(G_{IP}^t, G_{VR}^t)\right)$.
To prove that our \emph{GradProm} can achieve the local minimum, we need to show that the update rule above ensures a non-negative inner product between $\triangle \theta_{T}^t$ and $\nabla L_{IP}^t = \nabla_{\theta} L_{IP} (\theta^t)$. Specifically, we have:
\begin{equation}\label{eqn:inner_product}
\langle \triangle \theta_{T}^t, \nabla L_{IP}^t \rangle \geq 0.
\end{equation}
We expand the inner product in Eq.~\ref{eqn:inner_product} as follows:
\begin{equation}
\label{eqn:inner_product_expanded}
\small
\begin{split}
\langle \triangle \theta_{T}^t, \nabla L_{IP}^t \rangle & = \langle G_{IP}^t + G_{VR}^t \textrm{max}\left(0, \textrm{cos}(G_{IP}^t, G_{VR}^t)\right), \nabla L_{IP}^t\rangle \\
& = \langle G_{IP}^t, \nabla L_{IP}^t\rangle + \langle G_{VR}^t \textrm{max}(0, \textrm{cos}(G_{IP}^t, G_{VR}^t)), \nabla L_{IP}^t\rangle.
\end{split}
\end{equation}
We then can make the following two observations about the terms in Eq.~\ref{eqn:inner_product_expanded}:
\begin{enumerate}
\item $\langle G_{IP}^t, \nabla L_{IP}^t\rangle = ||\nabla L_{IP}^t||^2 \geq 0$, since the norm of a gradient vector is always non-negative.
\item $\langle G_{VR}^t \textrm{max}(0, \textrm{cos}(G_{IP}^t, G_{VR}^t)), \nabla L_{IP}^t\rangle \geq 0$, since the cosine similarity between $G_{IP}^t$ and $G_{VR}^t$ is always non-negative.
\end{enumerate}
Combining the above observations, we can conclude that $\langle \triangle \theta_{T}^t, \nabla L_{IP}^t \rangle \geq 0$. 
Suppose that $\langle \triangle \theta_{T}^t, \nabla L_{IP}^t \rangle = 0$. Then, from Eq.~\ref{eqn:inner_product_expanded}, we have:
\begin{equation}\label{eqn:inner_product_zero}
\langle G_{VR}^t \textrm{max}(0, \textrm{cos}(G_{IP}^t, G_{VR}^t)), \nabla L_{IP}^t\rangle = 0.
\end{equation}
Since $\textrm{max}(0, \textrm{cos}(G_{IP}^t, G_{VR}^t)) \geq 0$, we can infer from Eq.~\ref{eqn:inner_product_zero} that either $\langle G_{VR}^t, \nabla L_{IP}^t\rangle = 0$ or $\langle \nabla L_{IP}^t, G_{VR}^t\rangle = 0$. In either case, we can conclude that $\nabla L_{IP}^t = 0$, since the cosine similarity between two non-zero vectors is positive. Therefore, we have shown that if $\langle \triangle \theta_{T}^t, \nabla L_{IP}^t \rangle = 0$, then $\nabla L_{IP}^t = 0$. This proof implies that our proposed \emph{GradProm} approach guarantees a non-negative inner product between $\triangle \theta_{T}^t$ and $\nabla L_{IP}^t$, and can achieve the local minimum of the mainstream $IP$ model, under the condition that the learning rate $\alpha$ is sufficiently small. 
\end{proof} 
The proof demonstration establishes that the update rule employed in model training guarantees a non-negative inner product between $\triangle \theta_{T}^t$ and $\nabla L_{IP}^t$. This feature, in turn, enables the proposed \emph{GradProm} to converge towards a local minimum of the mainstream $IP$ model, without being affected by the gradient of the auxiliary $VR$ model, and deviating towards the learning direction of $VR$. Thus, we theoretically demonstrated that our proposed approach could facilitate the improvement of the convergence of the mainstream $IP$ model. 

\section{Experiments}
\subsection{Experiment Setup}
\myparagraph{Datasets.} We evaluate the performance of our proposed \emph{GradProm} approach on four challenging medical image analysis datasets with different image modalities: ISIC 2018~\cite{codella2019skin}, COVID-CT~\cite{yang2020covid}, Lizard~\cite{graham2021lizard}, and CAMUS~\cite{leclerc2019deep}. The details of each dataset are as follows:

\begin{itemize}
\item \textbf{ISIC 2018~\cite{codella2019skin}:} This skin lesion dataset contains $2,594$ RGB images and corresponding pixel-level annotated ground-truth images with a resolution of $600 \times 450$. Following~\cite{azad2020attention}, we randomly split the dataset into $80\%$ training and $20\%$ testing sets.
\item \textbf{COVID-CT~\cite{yang2020covid}:} COVID-CT is a CT dataset containing $349$ positive COVID-19 CT images and $397$ negative non-COVID-19 CT images from $216$ individual patients. Following~\cite{yang2020covid,wang2023novel}, we use $600$ images for training and $146$ images for testing.
\item \textbf{Lizard~\cite{graham2021lizard}:} Lizard consists of $238$ PNG images with $6$ nuclear categories, including ``epithelial'', ``connective'', ``lymphocytes'', ``plasma'', ``neutrophils'', and ``eosinophils''. Each image has a ground-truth label, which includes a nuclear class label and a segmentation mask. Following the commonly used settings~\cite{graham2021lizard}, in our experiments, we use $180$ images for training and $58$ images for testing.
\item \textbf{CAMUS~\cite{leclerc2019deep}:} CAMUS is an ultrasound image dataset of 2D echocardiography, comprising clinical examinations from a total of $500$ patients. The training set consisted of $450$ patients, among which $366$ had good to medium image quality and $84$ had poor image quality. The testing set comprised $50$ patients, of which $40$ had good to medium image quality and $10$ had poor image quality. Besides, corresponding manual references provided by a cardiologist expert, along with additional information such as diastolic-systolic phase instants, were also made available. In our experiments, the ground-truth regions of the endocardium are utilized for semantic segmentation.
\end{itemize}
For data augmentation, as in~\cite{liu2022exploring,zhou2018unet++}, we use horizontal flip, vertical flip, and random rotation with a range of $[-10^\circ, 10^\circ]$. In addition, following the commonly used strategies~\cite{anwar2018medical,zhang2022new}, we also use center cropping with a scale of $[50\% - 100\%]$ to increase the diversity of the training data. 

\myparagraph{Baselines.}
Following ESTR~\cite{liu2022exploring}, we select SR-ResNet~\cite{ledig2017photo} with the MSE loss as the baseline model for the mainstream $IP$. For the auxiliary diagnosis $VR_{dia}$, we use ResNet~\cite{he2016deep} for image classification as the baseline model, and for the auxiliary segmentation $VR_{seg}$, we use the classical UNet~\cite{ronneberger2015u} with ResNet as the backbone. Besides, ESTR~\cite{liu2022exploring} is also employed as a baseline model in our experiments, which is trained with fixed parameters as in its paper. Empirically, $\lambda$ is set to $10^{-4}$. These choices are made based on their proven effectiveness in medical image analysis tasks and their compatibility with our proposed \emph{GradProm} strategy.

\myparagraph{Training Details.}
We implemente the entire network, including the baseline models, on the PyTorch platform~\cite{paszke2019pytorch} using four GeForce RTX 3090 GPUs. We use Adam~\cite{kingma2014adam} as the optimizer, with an initial learning rate of $0.0001$. The mini-batch size is set to $8$ for both $IR$ and $VR$ during the training stage, and other settings followed the baseline model unless otherwise instructed.

\myparagraph{Evaluation Metrics.} To evaluate the performance of \emph{GradProm}, we adopt the commonly used PSNR and SSIM as evaluation metrics. To reduce the randomness of the results, we repeat each experiment three times and report the mean value and the standard deviation in the ablation study.
For visual recognition tasks, we use accuracy and mIoU for diagnosis and segmentation, respectively. While we acknowledge that there are many other evaluation metrics, we chose these representative metrics due to the page limit and research scope of our experiments. We believe that these metrics can provide a comprehensive evaluation of the performance of our \emph{GradProm} and enable a fair comparison with existing state-of-the-art methods in the field.

\subsection{Image Pre-Processing}
\label{sec:4:2}
To simulate real-world scenarios of low-quality medical images that require IQE, we followed the approach of previous work \cite{liu2022exploring,ronneberger2015u,jaffe2017super,patil2022medical,ledig2017photo,mahapatra2019image,huang2022winnet,behjati2023single,wang2023novel} to obtain synthetic low-quality input images. Specifically, for {image denoising}, we added a certain standard deviation $\sigma$ of white Gaussian noise to the normal image by default. For the use of other types of noise, we will introduce them in the subsequent specific experimental content sections.
For {super-resolution}, we down-sampled the normal image by a factor of $\gamma$ to obtain a low-resolution input image. The learning objective of our proposed \emph{GradProm} approach is to recover these processed images to their normal state. These image pre-processing steps enable us to evaluate the effectiveness of our proposed \emph{GradProm} strategy in improving medical image quality. The synthetic low-quality images are similar to those encountered in real-world medical applications, where image quality can be compromised due to various factors such as low-dose radiation exposure or sensor limitations of the imaging equipment. By improving the quality of these low-quality images using our proposed approach, we can potentially improve the accuracy of medical diagnosis and treatment planning, thus benefitting patients and healthcare providers alike.
\begin{table*}[t]
\centering
\scriptsize
\renewcommand\arraystretch{1.3}
\setlength{\tabcolsep}{2pt}{
\caption{Ablation study results on ISIC 2018~\cite{codella2019skin} for image denoising under different noise rates. $VR$ is the ResNet-101~\cite{he2016deep} for image diagnosis (classification). $\sigma$ denotes the rate of the white Gaussian noise. ``--'' denotes that there is no such a result. ``$\uparrow$'' means the higher the better. ``Sup.'' and ``Unsup.'' denotes the supervised and the unsupervised setting of the auxiliary visual recognition model, respectively.}
\vspace{-3mm}
\label{tab1}
\begin{tabular}{lrcccccccccccc}
\midrule[1pt]
& & \multicolumn{3}{c}{Noise: $\sigma$=0.05} & \multicolumn{3}{c}{Noise: $\sigma$=0.1} & \multicolumn{3}{c}{Noise: $\sigma$=0.2} & \multicolumn{3}{c}{Noise: $\sigma$=0.3} \\
\cmidrule(r){3-5} \cmidrule(r){6-8} \cmidrule(r){9-11} \cmidrule(r){12-14} Setting & Strategy & PSNR $\uparrow$ & SSIM  $\uparrow$ & Acc $\uparrow$ & PSNR $\uparrow$ & SSIM $\uparrow$ & Acc $\uparrow$ & PSNR $\uparrow$ & SSIM $\uparrow$ & Acc $\uparrow$ & PSNR $\uparrow$ & SSIM $\uparrow$ & Acc $\uparrow$ \\
\hline \hline 
\multicolumn{2}{r}{\emph{Benchmark-\romannumeral2}~\cite{ledig2017photo}} & -- & -- & 0.803 & -- & -- & 0.787 & -- & -- & 0.712 & -- & -- & 0.626\\ 
\multicolumn{2}{r}{\emph{Benchmark-\romannumeral1}~\cite{he2016deep}} & 31.281$\pm$2.43 & 0.870$\pm$0.019 & -- & 27.735$\pm$2.22 & 0.854$\pm$0.011 & -- & 22.367$\pm$2.11 & 0.654$\pm$0.015 & -- & 15.363$\pm$1.33 & 0.523$\pm$0.020 & -- \\ 
\hline 
\multirow{3}{*}{Unsup.} & Joint training & 33.033$\pm$1.82 & 0.901$\pm$0.018 & 0.856 & 31.815$\pm$2.02 & 0.895$\pm$0.009 & 0.841 & 24.357$\pm$2.41 & 0.712$\pm$0.017 & 0.769 & 19.363$\pm$1.05 & 0.603$\pm$0.018 & 0.702 \\
& Frozen-params. & 34.172$\pm$1.23 & 0.928$\pm$0.019 & 0.872 & 32.152$\pm$1.78 & 0.906$\pm$0.007 & 0.859 & 24.926$\pm$2.13 & 0.791$\pm$0.018 & 0.788 & 20.511$\pm$1.00 & 0.630$\pm$0.017 & 0.759 \\
& \emph{GradProm} & \textbf{35.368$\pm$0.92} & \textbf{0.935$\pm$0.014} & \textbf{0.880} & \textbf{33.383$\pm$1.00} & \textbf{0.915$\pm$0.006} & \textbf{0.863} & \textbf{25.981$\pm$1.35} & \textbf{0.801$\pm$0.006} & \textbf{0.797} & \textbf{21.422$\pm$0.88} & \textbf{0.647$\pm$0.009} & \textbf{0.773} \\
\hline 
\multirow{3}{*}{Sup.} & Joint training & 35.823$\pm$2.12 & 0.943$\pm$0.018 & 0.871 & 33.474$\pm$2.11 & 0.925$\pm$0.007 & 0.862 & 25.392$\pm$2.19 & 0.790$\pm$0.012 & 0.805 & 19.989$\pm$1.01 & 0.662$\pm$0.013 & 0.751 \\
& Frozen-params. & 36.017$\pm$2.01 & 0.951$\pm$0.019 &0.892 &33.962$\pm$2.25 &0.933$\pm$0.009 & 0.869 & 25.526$\pm$1.72 & 0.799$\pm$0.011 & 0.815 & 20.361$\pm$1.00 & 0.673$\pm$0.015 & 0.779\\
& \emph{GradProm} & \textbf{36.835$\pm$1.12} & \textbf{0.963$\pm$0.013} & \textbf{0.903} & \textbf{34.646$\pm$1.11} & \textbf{0.942$\pm$0.007} & \textbf{0.874} & \textbf{26.025$\pm$1.22} & \textbf{0.808$\pm$0.010} & \textbf{0.828} & \textbf{21.353$\pm$0.67} & \textbf{0.683$\pm$0.008} & \textbf{0.796} \\
\midrule[1pt]
\end{tabular}}
\vspace{-2mm}
\end{table*}

\subsection{Comparison Methods}
We compare our proposed \emph{GradProm} approach with several baseline methods to evaluate its effectiveness in improving image quality and achieving superior performance in the task-driven IQE paradigm. Specifically, we compare \emph{GradProm} with the following methods:

\begin{itemize}
\item \textbf{\emph{Benchmark-\romannumeral1}}: This baseline method utilizes SR-ResNet~\cite{ledig2017photo} for image enhancement without using an auxiliary model. This benchmark serves as a performance lower bound for \emph{GradProm}.
\item \textbf{\emph{Benchmark-\romannumeral2}}: This baseline method utilizes a pure ResNet~\cite{he2016deep} for image diagnosis.
\item \textbf{\emph{Benchmark-\romannumeral3}}: This baseline method utilizes a pure UNet~\cite{ronneberger2015u} with ResNet as the backbone for semantic segmentation.
\item \textbf{\emph{Joint Training}}: This is a commonly used strategy for training the task-driven IQE paradigm and multi-task learning models. In this approach, the encoder is jointly trained with the auxiliary model.
\item \textbf{\emph{Frozen-params. Training}}: This is another commonly used strategy for training the task-driven IQE paradigm. In this approach, following ESTR~\cite{liu2022exploring}, we freeze the parameters of $VR$ during training, and its gradient is not passed into $IP$.
\end{itemize}
We compare performance of the above methods with \emph{GradProm} on ISIC 2018~\cite{codella2019skin} with ResNet-50~\cite{he2016deep} as the encoder network for image denoising. We also believe that using a more advanced backbone network for the IQE model would lead to better performance. The experimental results demonstrate the effectiveness of \emph{GradProm} in improving image quality and achieving superior performance in the task-driven IQE paradigm over the above baselines.
\begin{table*}[t]
\centering
\scriptsize
\renewcommand\arraystretch{1.3}
\setlength{\tabcolsep}{1pt}{
\caption{Ablation study results on ISIC 2018~\cite{codella2019skin} for image denoising under different noise rates. $VR$ is the 
UNet~\cite{ronneberger2015u} with ResNet-101~\cite{he2016deep} as backbone for semantic segmentation by default. $\sigma$ denotes the rate of the white Gaussian noise. ``--'' denotes that there is no such a result. ``$\uparrow$'' means the higher the better. ``Sup.'' and ``Unsup.'' denotes the supervised and the unsupervised setting of the auxiliary visual recognition model, respectively. ``Multi-Task'' denotes that both  image diagnosis and semantic segmentation models are used in the VR task.}
\vspace{-3mm}
\label{tab2}
\begin{tabular}{lrcccccccccccc}
\midrule[1pt]
& & \multicolumn{3}{c}{Noise: $\sigma$=0.05} & \multicolumn{3}{c}{Noise: $\sigma$=0.1} & \multicolumn{3}{c}{Noise: $\sigma$=0.2} & \multicolumn{3}{c}{Noise: $\sigma$=0.3} \\
\cmidrule(r){3-5} \cmidrule(r){6-8} \cmidrule(r){9-11} \cmidrule(r){12-14} Setting & Strategy & PSNR $\uparrow$ & SSIM  $\uparrow$ & mIoU $\uparrow$ & PSNR $\uparrow$ & SSIM $\uparrow$ & mIoU $\uparrow$ & PSNR $\uparrow$ & SSIM $\uparrow$ & mIoU $\uparrow$ & PSNR $\uparrow$ & SSIM $\uparrow$ & mIoU $\uparrow$ \\
\hline \hline 
\multicolumn{2}{r}{\emph{Benchmark-\romannumeral3}~\cite{ledig2017photo}} & -- & -- & 0.767 & -- & -- & 0.720 & -- & -- & 0.631 & -- & -- & 0.582\\ 
\multicolumn{2}{r}{\emph{Benchmark-\romannumeral1}~\cite{he2016deep}} & 31.281$\pm$2.11 & 0.870$\pm$0.015 & -- & 27.735$\pm$2.10 & 0.854$\pm$0.013 & -- & 22.367$\pm$2.55 & 0.654$\pm$0.014 & -- & 15.363$\pm$2.03 & 0.523$\pm$0.015 & -- \\ 
\hline 
\multirow{3}{*}{Unsup.} & Joint training & 32.273$\pm$2.01 & 0.895$\pm$0.011 & 0.824 & 31.802$\pm$2.11 & 0.878$\pm$0.011 & 0.802 & 23.835$\pm$2.22 & 0.716$\pm$0.013 & 0.706 & 19.025$\pm$1.77 & 0.584$\pm$0.010 & 0.688 \\
& Frozen-params. & 32.826$\pm$2.00 & 0.899$\pm$0.009 & 0.853 & 32.038$\pm$2.02 & 0.898$\pm$0.010 & 0.852 & 24.025$\pm$1.90 & 0.753$\pm$0.011 & 0.737 & 19.836$\pm$1.50 & 0.601$\pm$0.008 & 0.709 \\
& \emph{GradProm} & \textbf{33.525$\pm$1.05} & \textbf{0.916$\pm$0.004} & \textbf{0.887} & \textbf{33.270$\pm$1.51} & \textbf{0.914$\pm$0.007} & \textbf{0.861} & \textbf{26.373$\pm$1.11} & \textbf{0.813$\pm$0.007} & \textbf{0.781} & \textbf{21.262$\pm$1.18} & \textbf{0.667$\pm$0.005} & \textbf{0.755} \\
\hline 
\multirow{3}{*}{Sup.} & Joint training & 34.163$\pm$2.01 & 0.903$\pm$0.011 & 0.857 & 32.703$\pm$2.07 & 0.904$\pm$0.012 & 0.841 & 24.734$\pm$2.34 & 0.752$\pm$0.011 & 0.755 & 19.462$\pm$1.60 & 0.606$\pm$0.011 & 0.703 \\
& Frozen-params. & 35.282$\pm$1.89 & 0.923$\pm$0.007 & 0.862 & 33.029$\pm$1.91 & 0.945$\pm$0.011 & 0.881 & 25.027$\pm$2.10 & 0.804$\pm$0.010 & 0.811 & 19.912$\pm$1.55 & 0.681$\pm$0.008 & 0.733 \\
& \emph{GradProm} & \textbf{36.825$\pm$1.33} & \textbf{0.952$\pm$0.005} & \textbf{0.881} & \textbf{34.820$\pm$1.25} & \textbf{0.962$\pm$0.006} & \textbf{0.895} & \textbf{26.901$\pm$1.03} & \textbf{0.894$\pm$0.004} & \textbf{0.845} & \textbf{21.261$\pm$1.01}& \textbf{0.732$\pm$0.003} & \textbf{0.761 }\\
\hline \hline 
Unsup. & Multi-Task & 31.826 $\pm$2.12	& 0.906 $\pm$0.011 & 0.809 & 30.272 $\pm$1.05 & 0.891 $\pm$0.005 & 0.791 & 23.289 $\pm$1.62 & 0.755 $\pm$0.002 & 0.790 & 19.850 $\pm$0.96 & 0.611 $\pm$0.005 & 0.733 \\
Sup. & Multi-Task & 32.016 $\pm$1.67 & 0.912 $\pm$0.010 & 0.812 & 31.026$\pm$0.91	& 0.909$\pm$0.007 &	
0.800 &	24.328 $\pm$0.77 & 0.787$\pm$0.004 & 0.811 & 19.902 $\pm$0.88 & 0.652 $\pm$0.008 & 0.737 \\
\midrule[1pt]
\end{tabular}}
\vspace{-2mm}
\end{table*}

\subsection{Ablation Study}
In this section, we conduct ablation studies on the representative ISIC 2018~\cite{codella2019skin} with a ResNet-50~\cite{he2016deep} as the encoder for image denoising to answer the following five questions:
\begin{itemize}
\item \textbf{\emph{Q-1}}) What is the impact of \emph{GradProm} on model performance under different noise rates?
\item \textbf{\emph{Q-2}}) How does \emph{GradProm} perform compared to other training strategies of the task-driven IQE paradigm?
\item \textbf{\emph{Q-3}}) How does GradProm perform under different training supervisions?
\item \textbf{\emph{Q-4}}) How does GradProm perform with multiple visual tasks?
\item \textbf{\emph{Q-5}} How does GradProm perform in more challenging and novel scenarios?
\end{itemize}

\myparagraph{\emph{A-1}) \emph{GradProm} performance under different noise rates.} To implement this experiment, we add different levels of white Gaussian noise to the normal images, ranging from $\sigma = [0.05, 0.1, 0.2, 0.3]$, and evaluate experimental results on $VR_{dia}$ and $VR_{seg}$, as shown in Table~\ref{tab1} and Table~\ref{tab2}, respectively, for both supervised and unsupervised settings. Compared to the \emph{Benchmarks}, joint training, and Frozen-params., we can observe that our proposed \emph{GradProm} can consistently improve the performance of the baseline model (\ie, ESTR~\cite{liu2022exploring} with Frozen-params.) at different noise rates. 
For instance, under the setting that image diagnosis is used as the auxiliary task in Table~\ref{tab1}, \emph{GradProm} improves ESTR~\cite{liu2022exploring} with Frozen-params. by up to $1.196/0.007$, $1.231/0.009$, $1.055/0.010$, and $0.911/0.017$ PSNR/SSIM for noise rates of $\sigma = 0.05, 0.1, 0.2,$ and $0.3$ on average, respectively. 
At the same time, \emph{GradProm} also outperforms the baseline ESTR~\cite{liu2022exploring} under the setting that UNet~\cite{ronneberger2015u} with ResNet-101~\cite{he2016deep} is used as as auxiliary backbone for semantic segmentation. \emph{GradProm} improves ESTR by $5.544/0.082$, $7.085/0.108$, $4.534/0.240$, and $5.898/0.197$ PSNR/SSIM under the noise rates of $\sigma = 0.05, 0.1, 0.2,$ and $0.3$ on average, respectively. The experimental results, conducted under settings with image diagnosis and semantic segmentation as auxiliary tasks, demonstrate that our method consistently boosts the baseline model across various noise rates. This evidence highlights the robust capacity of the proposed \emph{GradProm} in managing images with differing noise levels.
\begin{figure*}[t]
\centering
\includegraphics[width=.99\textwidth]{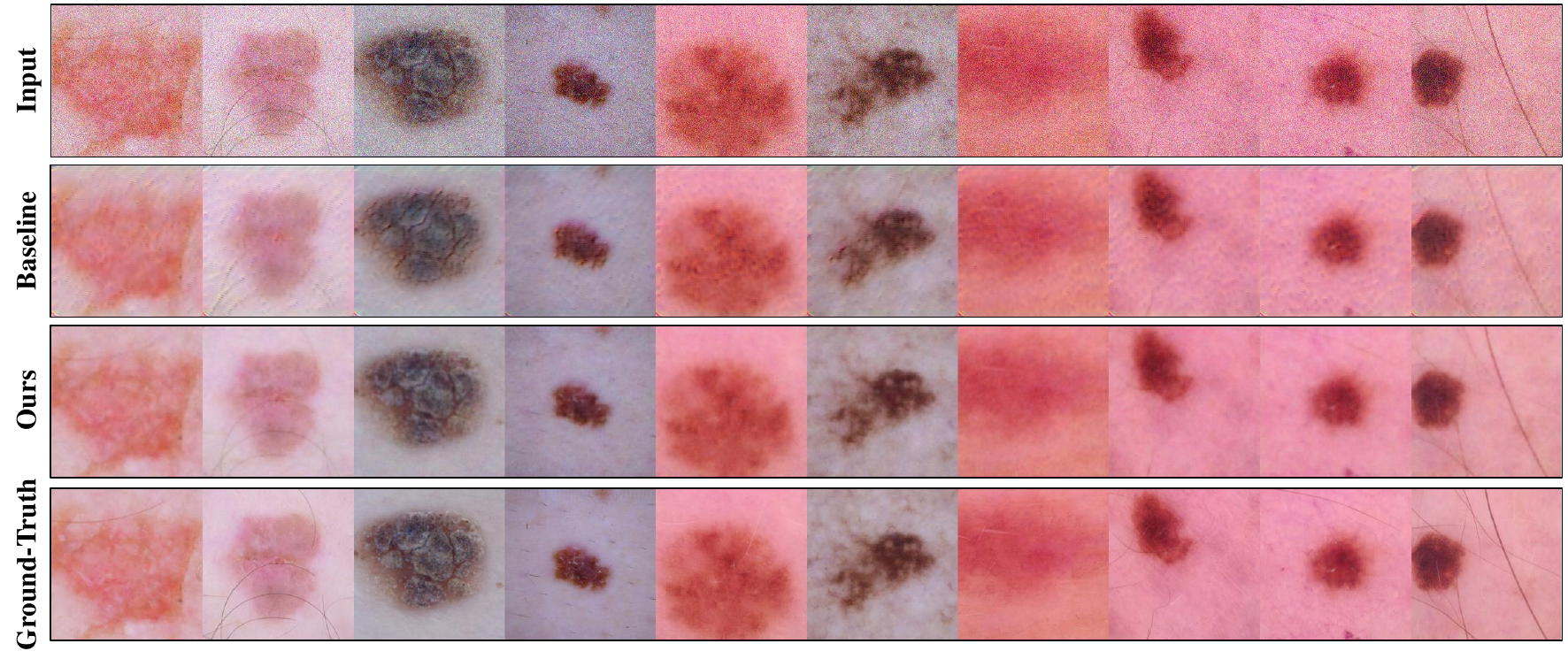}
\vspace{-5mm}
\caption{Qualitative result comparisons with the baseline ESTR~\cite{liu2022exploring} on ISIC 2018~\cite{codella2019skin} with ResNet-50~\cite{he2016deep} as the encoder network for image denoising, where $\sigma$ is set to $0.3$ and $VR_{dia}$ is used as the visual recognition model.}
\vspace{-2mm}
\label{fig3}
\end{figure*} 
\begin{figure*}[t]
\centering
\includegraphics[width=.85\textwidth]{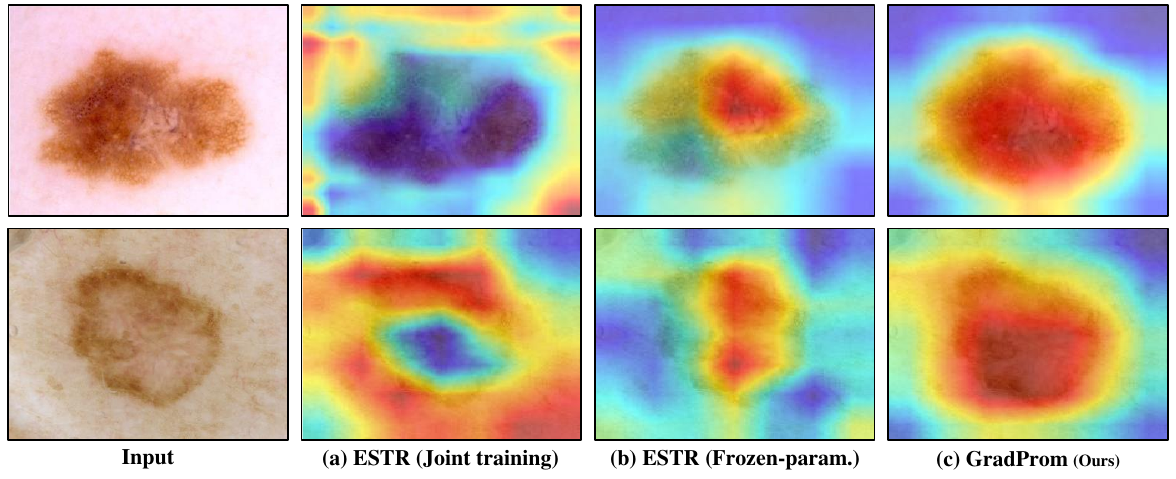}
\vspace{-5mm}
\caption{Class activation maps (CAMs) result comparisons with the baseline model ESTR~\cite{liu2022exploring} on ISIC 2018~\cite{codella2019skin} with ResNet-50~\cite{he2016deep} as the encoder for image denoising, where $\sigma$ is set to $0.3$ and $VR_{dia}$ is used as the visual recognition model.}
\vspace{-4mm}
\label{fig32}
\end{figure*} 

\myparagraph{\emph{A-2}) Superiority of \emph{GradProm} over other training strategies.} We compare \emph{GradProm} with the commonly used joint training~\cite{zeng2019joint} and Frozen-params.~\cite{baslamisli2018joint} under the same baseline model (\ie, ESTR~\cite{liu2022exploring}). In Tables~\ref{tab1} and~\ref{tab2}, we separately select image diagnosis and semantic segmentation as the auxiliary tasks. Our results demonstrate that \emph{GradProm} outperforms both joint training and Frozen-params. by a noticeable margin under the support of both $VR_{dia}$ and $VR_{seg}$. For example, in Table~\ref{tab2}, \emph{GradProm} outperforms the joint training strategy with the unsupervised training manner by $4.475/0.057$, $3.018/0.084$, $3.066/0.178$, and $2.236/0.148$ PSNR/SSIM under the noise rates of $\sigma = 0.05, 0.1, 0.2,$ and $0.3$, respectively. 
Besides, we also observe that with the pixel-level ground-truth of the skin lesion dataset, under the assistance of $VR_{seg}$, \emph{GradProm} attains a more significant performance gains compared to using $VR_{dia}$. This suggests that for datasets with a large proportion of foreground regions with  pixel-level ground-truth in the images, selecting a semantic segmentation model yields more pronounced benefits for the IQE model than opting for an image diagnosis model. These results not only validate the generality of our approach on different task combinations but also demonstrate that our proposed \emph{GradProm} can solve the problem of difficult training when two models have large differences in feature requirements.
\begin{table}[t]
\centering
\small
\renewcommand\arraystretch{1.1}
\setlength{\tabcolsep}{.1pt}{
\caption{Ablation study results on ISIC 2018~\cite{codella2019skin} under challenging and novel scenarios. $VR$ is the 
UNet~\cite{ronneberger2015u} with ResNet-101~\cite{he2016deep} as backbone for semantic segmentation. ``--'' denotes that there is no such a result.}
\vspace{-3mm}
\label{taba4}
\begin{tabular}{rcccccc}
\midrule[1pt]
~ & \multicolumn{3}{c}{Unsup.} & \multicolumn{3}{c}{Sup.} \\
& PSNR $\uparrow$ & SSIM $\uparrow$ & mIoU $\uparrow$ & PSNR $\uparrow$ & SSIM $\uparrow$ & mIoU $\uparrow$ \\
\hline \hline 
\emph{Benchmark-\romannumeral3}~\cite{ledig2017photo} &-- & -- & 0.285 &-- & -- & 0.339\\ 
\emph{Benchmark-\romannumeral1}~\cite{he2016deep} & 9.272 & 0.362 & -- &  11.722 & 0.400 & -- \\ 
\hline 
Joint training & 9.522 & 0.390 & 0.303 & 9.821 & 0.399 & 0.408 \\
Frozen-params. & 9.967 & 0.411 & 0.414 & 9.980 & 0.418 & 0.453\\
\emph{GradProm} & \textbf{10.351} & \textbf{0.499} & \textbf{0.501} & \textbf{11.882} & \textbf{0.551} & \textbf{0.597}\\
\midrule[1pt]
\end{tabular}}
\vspace{-4mm}
\end{table}

\myparagraph{\emph{A-3}) Effectiveness of \emph{GradProm} with different supervisions.} 
In section~\ref{sec3.1}, we introduced two fundamental task-driven IQE models for $VR$, specifically the unsupervised and supervised settings. In this ablation study, we explore the effectiveness of our proposed \emph{GradProm} approach under both supervised and unsupervised settings. The experimental results for image diagnosis and semantic segmentation as auxiliary tasks are presented in Table~\ref{tab1} and Table~\ref{tab2}, respectively. Overall, we can observe that our proposed \emph{GradProm} approach improves the performance of the baseline ESTR~\cite{liu2022exploring} model and the \emph{Benchmarks} under both supervisions. 
Notably, we observed that \emph{GradProm} performed better in the unsupervised scenario with $VR_{dia}$, while under $VR_{seg}$, it roughly performed better in the supervised scenario. This difference may be due to the specific feature requirements of the vision recognition model. For instance, when both the mainstream model (\ie, quality enhancement) and auxiliary model (\ie, segmentation) are pixel-level tasks, fine-grained supervision is necessary to ensure that the model does not deviate from the intended learning objectives~\cite{baslamisli2018joint}. 
Besides, these experimental results in Table~\ref{tab1} and Table~\ref{tab2} also provide guidance for selecting auxiliary task settings in the future. If the auxiliary task has pixel-level labels for the semantic segmentation task, it is necessary to configure the mainstream IQE model as a supervised learning manner to achieve better experimental performance. If the auxiliary task only has image-level class labels for diagnosis (classification), the mainstream IQE model should be set as an unsupervised learning manner.

\myparagraph{\emph{A-4}) Effectiveness of \emph{GradProm} with multiple visual tasks.} Considering the fairness of result comparisons and the availability of labels in the used dataset, we report experimental results of performing both image diagnosis~\cite{he2016deep} and semantic segmentation\cite{ronneberger2015u}in the VR task. We adopt the gradient update strategy proposed in our work as long as the IP gradient is aligned with those of image diagnosis or semantic segmentation model. The obtained results are shown in the bottom half of Table~\ref{tab2}. 
We can observe that simultaneously using these two models in the VR task does not improve the performance. Instead, there is a decrease in accuracy compared to using these two models independently. This observation can be attributed to the inconsistent feature requirements between image diagnosis and semantic segmentation models, as illustrated in (c) and (d) of Figure~\ref{figr1}. Therefore, simultaneous training of these two models with different feature requirements may not lead to optimal performance unless these models have similar or approximate feature requirements, such as semantic segmentation and saliency detection, as in~\cite{zeng2019joint}.

\myparagraph{\emph{A-5}) Effectiveness of \emph{GradProm} under more challenging and novel scenarios.} In section~\ref{sec:4:2}, we added plain white Gaussian noise to the normal image. In this ablation study, we investigate the performance of \emph{GradProm} in more challenging and novel scenarios. To conduct this experiment, we further introduce Poisson noise and Gaussian blur on top of the previously added white Gaussian noise ($\sigma = 0.3$), resulting in a more complex situation. This combination of noise and blur is typically observed during the collection of clinical medical imaging datasets~\cite{ma2013dictionary,huang2022winnet}. Following the commonly used setting~\cite{patil2022medical,zhang2022new}, the noise rate of Poisson noise is set to $0.1$, and the Gaussian blur is set with a kernel size of $3$ and a standard deviation of $2.0$.  The experimental results for both the supervised and unsupervised settings are presented in Table~\ref{taba4}. We observe that under the influence of multiple types of noise and blur, the performance of all models is significantly lower compared to the performance under a single type of noise (in Table~\ref{tab1} and Table~\ref{tab2}). This indicates that excessive noise and low image quality do have a significant negative impact on the recognition performance.
Besides, we observe that our \emph{GradProm} can still improve the performance of the baseline model in this challenging scenario. For example, our method achieves a PSNR/SSIM improvement of $0.384/0.088$ and $0.133/0.144$ compared to ESTR~\cite{liu2022exploring} under the Frozen-params. training strategy, in the unsupervised and supervised settings, respectively. We also observe that compared to the experimental results in Table~\ref{tab1} and Table~\ref{tab2}, the performance gain of \emph{GradProm} in this challenging scenario is slightly reduced. This suggests that the learning-based model can only learn limited cues for quality enhancement on severely low-quality images, and therefore may only improve image quality within a certain range, but cannot handle completely collapsed images. This conclusion is consistent with most IQE methods~\cite{johnson2016perceptual,ledig2017photo,liu2022exploring,mahapatra2019image,patil2022medical,huang2022winnet,ma2013dictionary,zhang2022new}.

\myparagraph{Qualitative results.}
We compare the qualitative results of our proposed method with the baseline~\cite{liu2022exploring} in Figure~\ref{fig3}, where $\sigma$ is set to $0.3$, and $VR_{dia}$ is used as the visual recognition model. It can be observed that our \emph{GradProm} achieves visually better denoising performance than the baseline model. Specifically, our approach produces processed images that are closer to the ground-truth images in terms of image smoothing and preserving the integrity of small foreground objects (\eg, the ``hair'' and ``small lesion''). Moreover, our approach demonstrates a significant advantage in the background regions. Qualitative results of the baseline model~\cite{liu2022exploring} in background regions appear more over-textured, while our results are closer to reality. This finding indicates that \emph{GradProm} not only achieves superior quantitative results but also better qualitative results. Furthermore, it demonstrates that the auxiliary visual recognition in the task-driven IQE system can improve the learning of the mainstream mainstream image enhancement.

We also visualize some class activation maps (CAMs) of the auxiliary $VR_{dia}$ in Figure~\ref{fig32}. CAMs can be used to indicate which regions of an image the model is focusing on. We can observe that compared to joint training and Frozen-params. on the same baseline ESTR~\cite{liu2022exploring}, the CAMs of \emph{GradProm} focus more on the foreground object regions. Conversely, the CAMs of ESTR with joint training and Frozen-params. focus on regions outside the object or part of the object, respectively. These results validate that high-quality image enhancement results can benefit the downstream auxiliary visual recognition model. Moreover, they also validate that our \emph{GradProm} does not cause the auxiliary task to deviate from the desired direction that needs to be optimized during training, achieving a win-win performance. Thus, the two sets of visualization results presented above confirm the effectiveness of our proposed GradProm in enhancing image quality while preserving important visual details, and demonstrate the superiority of the proposed task-driven IQE system. These qualitative results further demonstrate that our method not only produces visually superior images but also yields images that are advantageous for machine learning models to perform visual recognition tasks.

\subsection{Exploration on Domain Generalization}
As the aforementioned experimental results demonstrate that our proposed \emph{GradProm} can overcome the bias caused by the auxiliary $VR$ model, we investigate its effectiveness on domain generalization in this section. In clinical practice, sometimes we need to test and train on different datasets due to the need to protect patient privacy, \ie, training on one dataset but validating on another related dataset~\cite{chen2023federated}. Therefore, it is necessary for the model to have strong domain generalization capacity across datasets. To validate our method's capacity in this aspect, we train on ISIC 2018~\cite{codella2019skin} and test on Lizard~\cite{graham2021lizard}. Image diagnosis is used as the auxiliary task, where $\sigma$ is set to 0.3. The experimental results are given in Table~\ref{taba5}, where the experimental results of the ``Base'' row indicate that the training and testing images belong to the same dataset, \ie, there is no domain gap problem in these results. Firstly, we observe that compared to the scenario where training and testing images are within the same dataset, both ESTR and \emph{GradProm} exhibit relatively lower performance, indicating the presence of a domain gap problem between the two datasets. Then, we note that in comparison to ESTR~\cite{liu2022exploring}, our \emph{GradProm} attains superior IQE across datasets, signifying the domain generalization effectiveness of our training strategy. Furthermore, our \emph{GradProm} consistently delivers performance improvements under both unsupervised and supervised experimental settings. We finally achieve $13.273/0.325$ and $13.825/0.458$ PSNR/SSIM on these two settings. 
\begin{table}[t]
\centering
\small
\renewcommand\arraystretch{1.1}
\setlength{\tabcolsep}{5.5pt}{
\vspace{-3mm}
\caption{Experimrntal results on model generalization for image denoising. ``$\uparrow$'' means the higher the better.The experimental results of the ``Base'' row indicate that in this set of experiments, the training and testing data belong to the same dataset.}
\label{taba5}
\begin{tabular}{rcccc}
\midrule[1pt]
& \multicolumn{2}{c}{Unsup.} & \multicolumn{2}{c}{Sup.} \\
& PSNR $\uparrow$ & SSIM $\uparrow$ & PSNR $\uparrow$ & SSIM $\uparrow$ \\
\hline \hline 
Base ESTR~\cite{liu2022exploring} & 15.326 & 0.427  & 16.352 & 0.501 \\ 
Base \emph{GradProm} & 16.736 & 0.499  & 17.051 & 0.611 \\ 
\hline 
ESTR~\cite{liu2022exploring} & 8.236 & 0.285 & 8.262 & 0.292 \\
\emph{GradProm} & \textbf{13.273} & \textbf{0.325} & \textbf{13.825} & \textbf{0.458} \\
\midrule[1pt]
\end{tabular}}
\vspace{-4mm}
\end{table}
\begin{table*}[t]
\centering
\scriptsize
\renewcommand\arraystretch{1.3}
\setlength{\tabcolsep}{9pt}{
\caption{Result comparisons with state-of-the-arts on ISIC 2018~\cite{codella2019skin}for image denoising on different noise rates. ``Task-Driven?'' means if this method is a task-driven model. $VR_{dia}$ and $VR_{seg}$ denote that results are obtained by image diagnosis and semantic segmentation, respectively.}
\vspace{-4mm}
\label{tab3}
\begin{tabular}{rrrccccccc}
\midrule[1pt]
&  & & & \multicolumn{2}{c}{$\sigma$=0.1} & \multicolumn{2}{c}{$\sigma$=0.2} & \multicolumn{2}{c}{$\sigma$=0.3} \\
\cmidrule(r){5-6} \cmidrule(r){7-8} \cmidrule(r){9-10}Methods & Backbone & Publication & Task-Driven? & PSNR $\uparrow$ & SSIM $\uparrow$ & PSNR $\uparrow$ & SSIM $\uparrow$ & PSNR $\uparrow$ & SSIM $\uparrow$ \\
\hline \hline
SR-ResNet~\cite{ledig2017photo} & VGG-22 & CVPR'17 & No & 19.478 & 0.526 & 17.926 & 0.401 & 14.273 & 0.334 \\
ESTR~\cite{liu2022exploring} & ResNet-50 & TPAMI'22 & Yes & 32.172 & 0.882 & 23.524 & 0.725 & 18.587 & 0.528 \\
ESTR~\cite{liu2022exploring} & ResNet-101 & TPAMI'22 & Yes & 33.723 & 0.899 & 25.925 & 0.790 & 20.163 & 0.667 \\
U-SAID~\cite{wang2019segmentation} & CDnCNN-B & arXiv'19 & Yes & 33.835 & 0.861 & 26.003 & 0.752 & 20.836 & 0.670 \\
D2SM~\cite{mei2022deep} & SADNet & ECCV'22 & Yes & 34.282 & 0.892 & 25.989 & 0.766 & 20.989 & 0.656\\
WinNet~\cite{huang2022winnet} & ResNet-101 & TIP'22 & No & 33.929 & 0.882 & 24.262 & 0.800 & 19.272 & 0.680\\
ADAP~\cite{jiang2022deep} & ResNet-101 & TCSVT'22 & No & 34.858 & 0.889 & 24.926 & 0.810 & 20.373 & 0.695\\
\hline
Ours(${VR_{dia}}$) & ResNet-50 & submission & Yes & 34.646 & 0.942 & 26.025 & 0.808 & 21.353 & 0.683 \\
Ours(${VR_{seg}}$) & ResNet-50 & submission & Yes & 34.820 & 0.962 & 26.901 & 0.894 & 21.261 & 0.732 \\
Ours(${VR_{dia}}$) & ResNet-101 & submission & Yes & 35.025 & 0.956 & 26.385 & 0.833 & 21.924 & 0.719 \\
Ours(${VR_{seg}}$) & ResNet-101 & submission & Yes & 36.173 & 0.971 & 28.024 & 0.901 & 23.703 & 0.761 \\
\midrule[1pt]
\end{tabular}}
\vspace{-4mm}
\end{table*}

\subsection{Comparisons with State-of-the-Arts}
In this section, we compare \emph{GradProm} with state-of-the-art methods on three datasets, including ISIC 2018~\cite{codella2019skin}, COVID-CT~\cite{yang2020covid}, Lizard~\cite{graham2021lizard}, and CAMUS~\cite{leclerc2019deep}.
\begin{table*}[t]
\centering
\scriptsize
\renewcommand\arraystretch{1.3}
\setlength{\tabcolsep}{9pt}{
\caption{Result comparisons with state-of-the-arts on COVID-CT~\cite{yang2020covid} for image denoising on different noise rates. ``Task-Driven?'' means if this method is a task-driven model. $VR_{dia}$ denote that results are obtained by image diagnosis.}
\vspace{-4mm}
\label{tab4}
\begin{tabular}{rrrccccccc}
\midrule[1pt]
& & & & \multicolumn{2}{c}{$\sigma$=0.1} & \multicolumn{2}{c}{$\sigma$=0.2} & \multicolumn{2}{c}{$\sigma$=0.3} \\
\cmidrule(r){5-6} \cmidrule(r){7-8} \cmidrule(r){9-10}Methods & Backbone & Publication & Task-Driven? & PSNR $\uparrow$ & SSIM $\uparrow$ & PSNR $\uparrow$ & SSIM $\uparrow$ & PSNR $\uparrow$ & SSIM $\uparrow$ \\
\hline \hline
SR-ResNet~\cite{ledig2017photo} & VGG-22 & CVPR'17 & No & 15.262 & 0.401 & 14.272 & 0.382 & 12.272 & 0.262\\
ESTR~\cite{liu2022exploring} & ResNet-50 & TPAMI'22 & Yes & 23.363 & 0.652 & 20.017 & 0.592 & 17.937 & 0.445 \\
ESTR~\cite{liu2022exploring} & ResNet-101 & TPAMI'22 & Yes & 24.784 & 0.692 & 22.011 & 0.612 & 19.002 & 0.522 \\
NAFSSR~\cite{chu2022nafssr}  & NAFNet & CVPR'22 & Yes & 24.635 & 0.701 & 21.638 & 0.641 & 19.635 & 0.535 \\
RLFNet~\cite{kong2022residual} & ResNet-101 & CVPR'22 & No & 25.010 & 0.732 & 22.272 & 0.688 & 20.826 & 0.544\\
DVANet~\cite{behjati2023single} & ResNet-101 & PR'23 & No & 25.252 & 0.791 & 23.075 & 0.701 & 21.088 & 0.686\\
\hline
Ours(${VR_{dia}}$) & ResNet-50 & submission & Yes & 25.115 & 0.752 & 23.346 & 0.753 & 20.631 & 0.629 \\
Ours(${VR_{dia}}$) & ResNet-101 & submission & Yes & 28.115 & 0.854 & 24.152 & 0.791 & 21.362 & 0.701 \\
\midrule[1pt]
\end{tabular}}
\vspace{-4mm}
\end{table*}
\begin{figure*}[t]
\centering
\includegraphics[width=.95\textwidth]{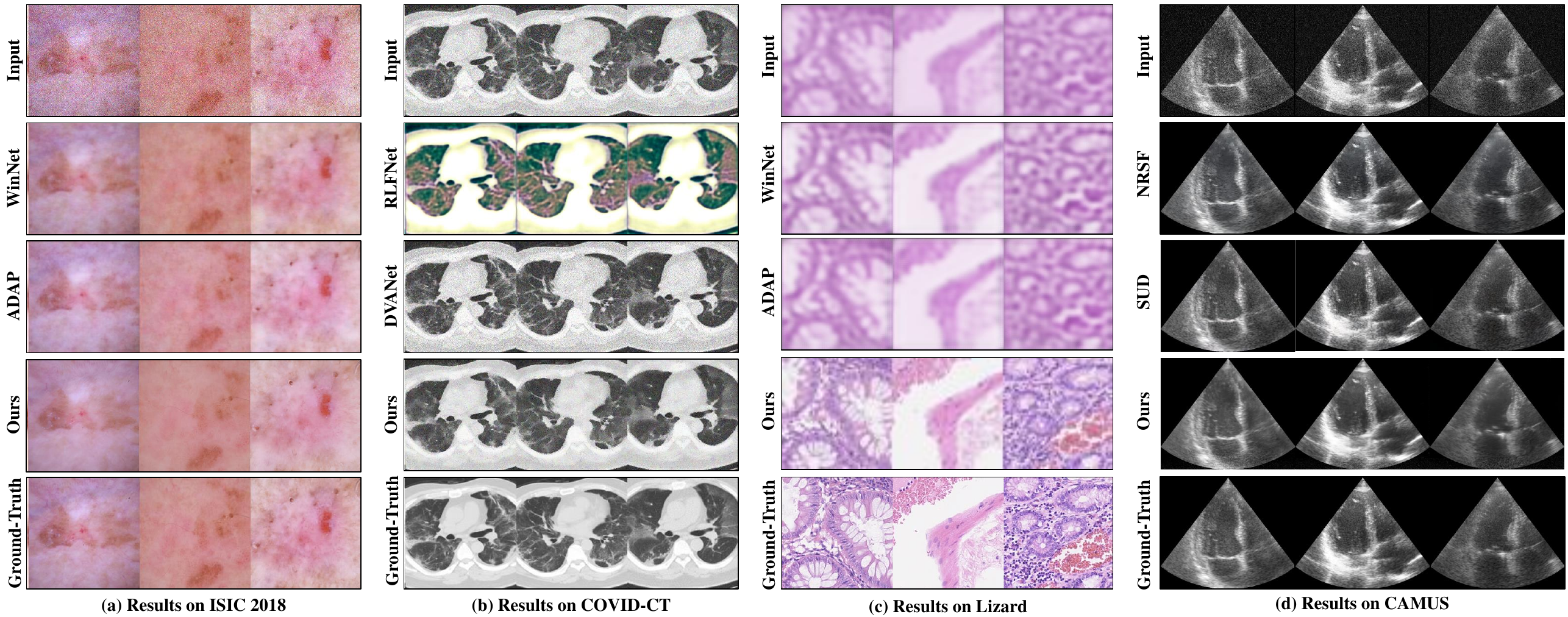}
\vspace{-4mm}
\caption{Visualization result comparisons with state-of-the-art methods on ISIC 2018~\cite{codella2019skin}, COVID-CT~\cite{yang2020covid}, Lizard~\cite{graham2021lizard}, and  CAMUS~\cite{leclerc2019deep}, where ISIC 2018, COVID-CT, and CAMUS are used for the denoising task and Lizard is used for the super-resolution task.}
\label{fig4}
\vspace{-3mm}
\end{figure*} 
\begin{table*}[t]
\centering
\scriptsize
\renewcommand\arraystretch{1.3}
\setlength{\tabcolsep}{15pt}{
\caption{Result comparisons with state-of-the-arts on Lizard~\cite{graham2021lizard} for image super-resolution on different downsampling rates. ``Task-Driven?'' means if this method is a task-driven model. $VR_{dia}$ and $VR_{seg}$ denote that results are obtained by image diagnosis and semantic segmentation, respectively.}
\vspace{-4mm}
\label{tab5}
\begin{tabular}{rrrccccc}
\midrule[1pt]
& & & & \multicolumn{2}{c}{$\gamma$=2} & \multicolumn{2}{c}{$\gamma$=4} \\
\cmidrule(r){5-6} \cmidrule(r){7-8} Methods & Backbone & Publication & Task-Driven? & SSIM $\uparrow$ & SSIM $\uparrow$ & PSNR $\uparrow$ & SSIM $\uparrow$\\
\hline \hline
SR-ResNet~\cite{ledig2017photo} & VGG-22 & CVPR'17 & No & 18.282 & 0.554 & 16.222 & 0.494\\
ESTR~\cite{liu2022exploring} & ResNet-50 & TPAMI'22 & Yes & 21.018 & 0.601 & 16.828 & 0.494 \\
ESTR~\cite{liu2022exploring} & ResNet-101 & TPAMI'22 & Yes & 23.373 & 0.621 & 17.252 & 0.551\\
D2SM~\cite{mei2022deep} & SADNet & ECCV'22 & Yes & 24.252 & 0.656 & 17.171 & 0.566\\
WinNet~\cite{huang2022winnet} & ResNet-101 & TIP'22 & No & 22.989 & 0.622 & 18.655 & 0.543\\
ADAP~\cite{jiang2022deep} & ResNet-101 & TCSVT'22 & No & 23.212 & 0.602 & 17.905 & 0.701\\
\hline
Ours(${VR_{dia}}$) & ResNet-50 & submission & Yes & 22.436 & 0.681 & 18.253 & 0.524 \\
Ours(${VR_{seg}}$) & ResNet-50 & submission & Yes & 21.937 & 0.650 & 18.724 & 0.509 \\
Ours(${VR_{dia}}$) & ResNet-101 & submission & Yes & 25.167 & 0.708 & 19.524 & 0.598 \\
Ours(${VR_{seg}}$) & ResNet-101 & submission & Yes & 23.243 & 0.685 & 19.173 & 0.566 \\
\midrule[1pt]
\end{tabular}}
\vspace{-5mm}
\end{table*}
\begin{table*}[t]
\centering
\scriptsize
\renewcommand\arraystretch{1.3}
\setlength{\tabcolsep}{9pt}{
\caption{Result comparisons with state-of-the-arts on CAMUS~\cite{leclerc2019deep} for image denoising on different noise rates. ``Task-Driven?'' means if this method is a task-driven model. $VR_{seg}$ denote that results are obtained by semantic segmentation.}
\vspace{-2mm}
\label{tab6}
\begin{tabular}{rrrccccccc}
\midrule[1pt]
& & & & \multicolumn{2}{c}{$\sigma$=0.1} & \multicolumn{2}{c}{$\sigma$=0.2} & \multicolumn{2}{c}{$\sigma$=0.3} \\
\cmidrule(r){5-6} \cmidrule(r){7-8} \cmidrule(r){9-10}Methods & Backbone & Publication & Task-Driven? & PSNR $\uparrow$ & SSIM $\uparrow$ & PSNR $\uparrow$ & SSIM $\uparrow$ & PSNR $\uparrow$ & SSIM $\uparrow$ \\
\hline \hline
SR-ResNet~\cite{ledig2017photo} & VGG-22 & CVPR'17 & No & 13.825 & 0.425 & 13.128 & 0.401 & 12.429 & 0.355\\
ESTR~\cite{liu2022exploring} & ResNet-50 & TPAMI'22 & Yes & 21.463 & 0.550 & 20.935 & 0.509 & 18.572 & 0.455 \\
DDM~\cite{stevens2023dehazing} & ResNet-50 & arXiv'23 & No & 22.163 & 0.499 & 21.881 & 0.414 & 29.767 & 0.405 \\
ESTR~\cite{liu2022exploring} & ResNet-101 & TPAMI'22 & Yes & 23.261 & 0.593 & 21.362 & 0.556 & 19.363 & 0.482 \\
NRSF~\cite{wu2022semi} & ResNet-101 & MedIA'22 & No & 24.261 & 0.599 & 22.016 & 0.534 & 20.952 & 0.479 \\
SUD~\cite{young2023supervision} & ResNet-101 & TPAMI'23 & No & 23.268 & 0.592 & 21.852 & 0.561 & 20.881 & 0.491\\
\hline
Ours(${VR_{seg}}$) & ResNet-50 & submission & Yes & 23.928 & 0.666 & 22.909 & 0.628 & 20.858 & 0.587 \\
Ours(${VR_{seg}}$) & ResNet-101 & submission & Yes & 24.425 & 0.689 & 23.022 & 0.700 & 21.055 & 0.599\\
\midrule[1pt]
\end{tabular}}
\vspace{-2mm}
\end{table*}

\myparagraph{Results on ISIC 2018.} We compare image denoising results on ISIC 2018 with state-of-the-art methods, including SR-ResNet~\cite{ledig2017photo}, ESTR~\cite{liu2022exploring}, U-SAID~\cite{wang2019segmentation}, D2SM~\cite{mei2022deep}, WinNet~\cite{huang2022winnet}, and ADAP~\cite{jiang2022deep}. To ensure fair comparisons, we report our results on ${VR_{dia}}$ and ${VR_{seg}}$ with ResNet-50 and ResNet-101~\cite{he2016deep} as the backbone. Table~\ref{tab3} shows the results, indicating that our approach achieves state-of-the-art performance under different noise rates. In particular, \emph{GradProm} outperforms most existing methods by a large margin, even with a weaker backbone (ResNet-50 \emph{vs} ResNet-101), validating its importance and superiority. We achieve $36.173/0.971$, $28.024/0.901$, and $23.703/0.761$ PSNR/SSIM under noise rates of $0.1$, $0.2$, and $0.3$, respectively. Besides, we also observed an interesting phenomenon, where using ${VR_{seg}}$ as the auxiliary task yields better results than using ${VR_{dia}}$ under the setting that the IQE model is supervised. Through comparisons with experimental results on other datasets and the characteristics of the images in the dataset, we speculate that the reason for this may be that when the region of interest in the image is relatively large, ${VR_{seg}}$ can provide unabridged location guidance for the mainstream IQE model, and these regions are often difficult to handle during for IQE. In contrast, ${VR_{dia}}$ can only focus the visual recognition model on locally discriminative regions.

\myparagraph{Results on COVID-CT.} We compare denoising results on COVID-CT with state-of-the-art methods, including SR-ResNet~\cite{ledig2017photo}, ESTR~\cite{liu2022exploring} with ResNet-50 and ResNet-101~\cite{he2016deep}, NAFSSR~\cite{chu2022nafssr}, RLFNet~\cite{kong2022residual}, and DVANet~\cite{behjati2023single}. Our experiments use ${VR_{dia}}$ with ResNet-50 and ResNet-101 as the backbone. Table~\ref{tab4} shows the experimental results, indicating that our \emph{GradProm} achieves very competitive performance under different backbones, which further demonstrates its superiority. Although \emph{GradProm} achieves worse performance than DVANet~\cite{behjati2023single} in some cases of noise, DVANet is based on a stronger backbone than ours.

\myparagraph{Results on Lizard.}
We compare super-resolution performance on Lizard with state-of-the-art methods, including SR-ResNet~\cite{ledig2017photo}, ESTR~\cite{liu2022exploring}, U-SAID~\cite{wang2019segmentation}, D2SM~\cite{mei2022deep}, WinNet~\cite{huang2022winnet}, and ADAP~\cite{jiang2022deep}. Results in Table~\ref{tab5} indicates that our \emph{GradProm} achieves state-of-the-art performance with the same backbone. In comparison to the experimental results in Table~\ref{tab3} and Table~\ref{tab4}, it has been observed that the performance of the ${VR_{dia}}$, when employed as an auxiliary task, is superior to that of the ${VR_{seg}}$. This observation corroborates our previous conjecture. These results on different datasets and tasks validate the robustness and generalization of our \emph{GradProm}. Importantly, \emph{GradProm} does not require additional data or changes to the network architecture, so there is no increase in computational overhead in inference.

\myparagraph{Results on CAMUS.} 
We compare denoising results on CAMUS~\cite{leclerc2019deep} with state-of-the-art methods, including SR-ResNet~\cite{ledig2017photo}, ESTR~\cite{liu2022exploring}, DDM~\cite{stevens2023dehazing}, NRSF~\cite{wu2022semi}, and SUD~\cite{young2023supervision}. As shown in the Table~\ref{tab6} of experimental results, our \emph{GradProm} achieves new state-of-the-art performance under different noise ratios and with the assistance of different visual recognition models. Among the compared methods, we find that NRSF~\cite{wu2022semi} achieves better PSNR results than the other methods. However, as the noise ratio gradually increases, its SSIM metric exhibits a more significant decline. This may be due to the model focusing too much on improving certain regions that are easy to enhance, resulting in an imbalanced improvement in image quality across different regions. To address this problem, improvements to this method should focus on the region-of-interests through the design of the loss function. 
In the results of our proposed \emph{GradProm}, we obtained consistent conclusions as in ISIC 2018~\cite{codella2019skin}, indicating that when the region of interest (\ie, the endocardium regions are utilized in our experiments) in the image is relatively large, using a semantic segmentation model as the auxiliary model yields satisfactory results. 

\subsection{Visualizations}
In this section, we present visualized comparisons of our proposed method with state-of-the-art methods on ISIC 2018~\cite{codella2019skin}, COVID-CT~\cite{yang2020covid}, Lizard~\cite{graham2021lizard}, and CAMUS~\cite{leclerc2019deep}, as shown in Figure~\ref{fig4}. The results demonstrate that our model achieves the best visualization performance among the compared methods. Specifically, on the ISIC 2018 dataset, our method demonstrates significant performance in preserving the integrity of foreground objects and the sharpness of object boundaries compared to the state-of-the-art ADAP~\cite{jiang2022deep} and WinNet~\cite{huang2022winnet}. For instance, as shown in Figure~\ref{fig4} (a), our method almost achieves the same effect as ground-truth for some relatively small red lesion areas, validating the effectiveness of our proposed \emph{GradProm} approach. This finding suggests that our method can further improve the performance of mainstream IQE models while maintaining their original recognition performance. 
On the COVID-CT dataset, the visualizations in Figure~\ref{fig4} (b) show that our method achieves a more significant quality improvement compared to the second-best RLFNet~\cite{kong2022residual}, which even fails to recover color information of the image effectively. We hypothesize that this may be attributed to the specific characteristics of medical images, such as the fact that COVID-CT is a grayscale image dataset, and the method in question does not possess the capability to handle grayscale images effectively. These results also illustrate that our \emph{GradProm} can preserve the integrity of foreground objects, such as the lung, even in medical images. More importantly, our approach is not only capable of handling color medical images but also grayscale medical images, thereby validating the greater versatility of our method.
The visualized results on super-resolution in Figure~\ref{fig4} (c) show that our \emph{GradProm} outperforms existing methods, such as ADAP~\cite{jiang2022deep} and WinNet~\cite{huang2022winnet}, on the Lizard dataset. However, the results also show that our method cannot fully achieve results that are close to ground-truth, possibly due to the limited training number of images in Lizard. These findings suggest that both the model training strategy and the amount of training data are important factors affecting the model performance on this dataset. 
Finally, the visualization results on the CAMUS dataset are presented in Figure~\ref{fig4} (d). In comparison to the state-of-the-art methods on this dataset, namely SUD~\cite{young2023supervision} and NRSF~\cite{wu2022semi}, it can be observed that our \emph{GradProm} still achieves the best visualization results, characterized by clearer object boundaries and more distinct foreground object regions. 
Moreover, the visualization results also indicate that, compared to SUD and NRSF, our approach can mitigate the inherent speckle noise in ultrasound images. In comparison, SUD~\cite{young2023supervision} and NRSF~\cite{wu2022semi} tend to exacerbate the effects of speckle noise during the denoising process. This suggests that during training, the downstream visual recognition model can convey useful semantic information to the mainstream IQE model through our proposed strategy. This beneficial semantic information transcends the texture features in the images, resulting in superior perceptual outcomes.
These visualized comparisons presented in this section demonstrate the effectiveness and superiority of our proposed \emph{GradProm} over state-of-the-art methods on different image datasets and modalities.
\begin{figure}[t]
\centering
\includegraphics[width=.48\textwidth]{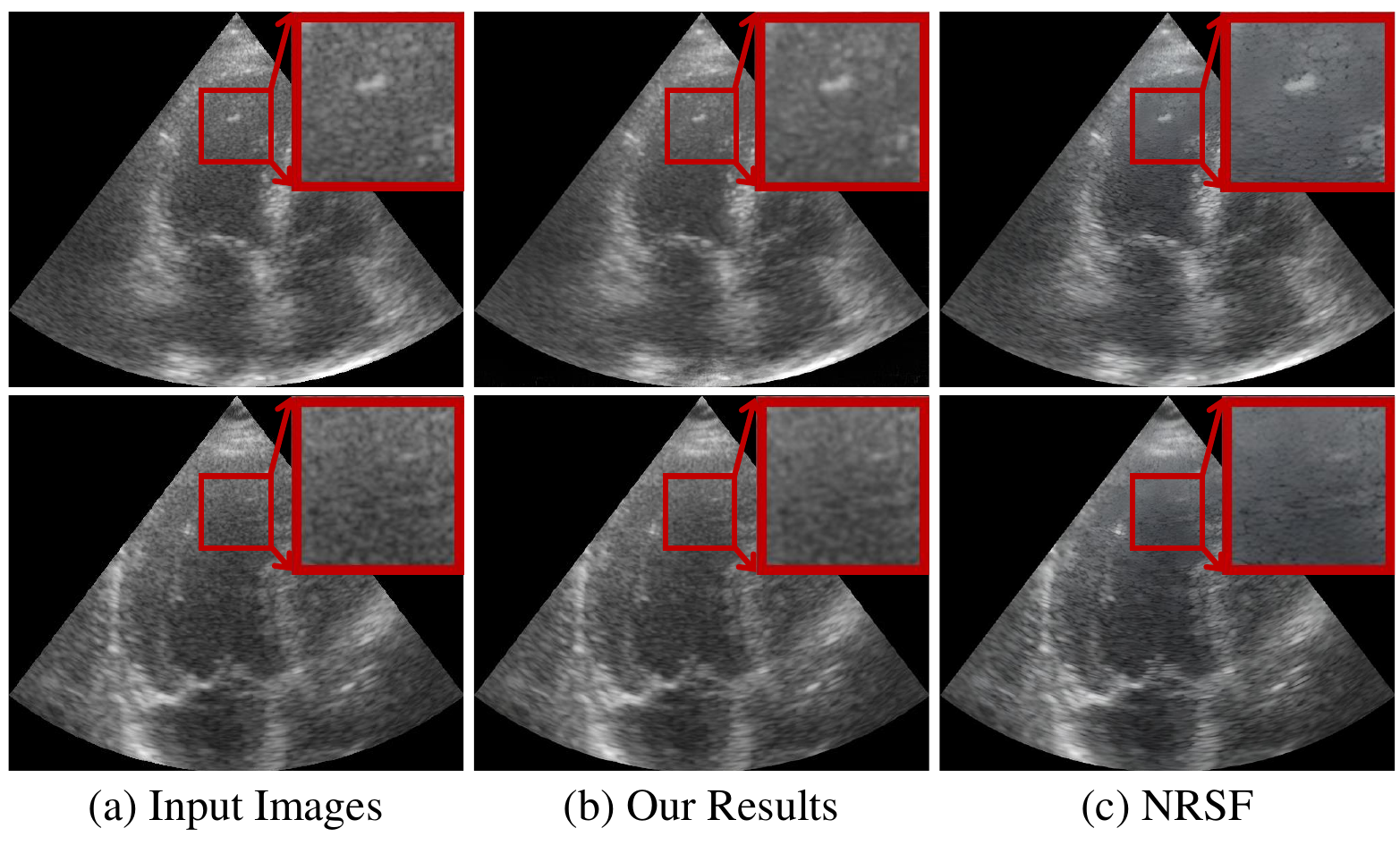}
\vspace{-2mm}
\caption{Visualization results on the genuine low-quality images. Samples are taken from the CAMUS~\cite{leclerc2019deep} dataset, which inherently exhibits the problem of speckle noise. A randomly highlighted red bounding-box emphasizes an area where our model demonstrates better results.}
\vspace{-2mm}
\label{fig6}
\end{figure} 

The inputs for the visualizations in Figure~\ref{fig4} are based on synthetic noisy samples. Additionally, we provide visualization results on genuine low-quality images in Figure~\ref{fig6}. We employ normal ultrasound images as input, which inherently exhibit the problem of speckle noise. For comparison purposes, we use the state-of-the-art method NRSF~\cite{wu2022semi} on this dataset. We can observe that our proposed \emph{GradProm} still demonstrates effectiveness in handling speckle noise, even though this type of noise was not considered during our training process. In contrast, NRSF exacerbates the effects of this noise. These visualization results on genuine low-quality ultrasound images confirm that the semantic information obtained through the downstream $VR$ model can indeed contribute to enhancing the performance of the mainstream IQE model.
\section{Conclusion and Future Work}
In this paper, we proposed a versatile and effective training strategy, \emph{GradProm}, for medical IQE within a task-driven framework. Applicable to a broad range of image modalities, \emph{GradProm} is not restricted to a specific medical image modality. Our method divides the IQE system into a mainstream model for image enhancement and an auxiliary model for visual recognition. The \emph{GradProm} strategy updates parameters only when the gradients of these two sub-models align in the same direction, as determined by their cosine similarity. If the gradients are not aligned, \emph{GradProm} employs the gradient of the mainstream image enhancement model for parameter updates. While more complex intermediate scenarios may exist in practice, we argue that investigating these particular details is unnecessary. The proposed \emph{GradProm} prevents the auxiliary visual recognition model from introducing bias into the mainstream image enhancement model during training. We have theoretically demonstrated that the gradient descent direction of the image enhancement model remains unbiased by the auxiliary visual recognition model when \emph{GradProm} is implemented. Furthermore, we have showcased the superiority of \emph{GradProm} over existing state-of-the-art methods in denoising, super-resolution, diagnosis, and segmentation tasks.

As a generalized method, \emph{GradProm} can be extended to other task-driven training processes, such as multi-objective learning, task-driven data augmentation, and image compression. It can be applied to the segmentation and classification branches of a semantic segmentation model, as well as the regression and detection branches of an object detection model. In future work, we aim to explore the effectiveness of \emph{GradProm} in other task-driven training processes. Besides, we plan to investigate the application of \emph{GradProm} to other medical image analysis tasks, including medical image registration and reconstruction. We also intend to examine the potential of \emph{GradProm} in combination with other technologies, such as transfer learning and domain adaptation, to further improve the generalization of trained models across diverse medical imaging modalities and datasets.
\bibliographystyle{IEEEtran}
\bibliography{main}
\end{document}